\documentclass{article}

\usepackage{iclr2027_conference,times}

\usepackage[utf8]{inputenc}
\usepackage[T1]{fontenc}
\usepackage{hyperref}
\usepackage{url}
\usepackage{booktabs}
\usepackage{amsfonts}
\usepackage{nicefrac}
\usepackage{microtype}
\usepackage{xcolor}
\usepackage{graphicx}
\usepackage{wrapfig}
\usepackage{adjustbox}

\usepackage{amsmath}
\usepackage{booktabs}
\usepackage{makecell}
\usepackage{array}
\usepackage{tikz}
\usepackage{bbm}
\usepackage{amsthm}
\usepackage{amssymb}
\usepackage{algorithm}
\usepackage{algorithmic}
\usepackage{enumitem}
\usepackage{cleveref}

\usepackage{multirow}
\usepackage{graphicx}

\newcommand{\mvtec}{\textbf{MVTec AD}}
\newcommand{\chaos}{\textbf{CHAOS}}

\newcommand{\ffhq}{\textbf{FFHQ}}
\newcommand{\xray}{\textbf{DVXray}}

\newcommand{\name}{LACToS}

\newcommand{\R}{\mathbb{R}}

\newcommand{\Zpixel}{\mathbb{Z}_{256}}

\newcommand{\J}{\mathcal{J}}
\newcommand{\K}{\mathcal{K}}
\newcommand{\M}{M}
\newcommand{\E}{\mathbb{E}}

\newcommand{\xstar}{x^{\star}}
\newcommand{\mstarpixel}{m^{\star}}
\newcommand{\mhatpixel}{\hat{m}}

\newcommand{\x}{x}
\newcommand{\xj}{x_{j}}

\newcommand{\xstarj}{x^{\star}_{j}}
\newcommand{\xstarnotj}{x^{\star}_{-j}}

\newcommand{\xmj}{x^{m}_{j}}

\newcommand{\pj}[1]{p(#1|x_{-j}^\star)}
\newcommand{\pk}[1]{p(#1|x_{-S_k}^\star)}

\newcommand{\qk}{q_{k}^{\mathrm{inp}}}
\newcommand{\pden}{p^{\mathrm{den}}}
\newcommand{\pinpk}{p^{\mathrm{inp}}_k}

\newcommand{\entropy}{\mathbb{H}}       

\newcommand{\spxs}{\mathcal{S}(x^{\star})}           
\newcommand{\spx}[1]{S_{#1}}                        
\newcommand{\spxk}{S_{k}}

\newcommand{\xspxk}{x_{S_{k}}}
\newcommand{\xspxnotk}{x_{-S_{k}}}
\newcommand{\xstarspxk}{x^{\star}_{S_{k}}}
\newcommand{\xstarspxnotk}{x^{\star}_{-S_{k}}}

\theoremstyle{definition}

\newtheorem{assumption}{Assumption}[section]
\newtheorem{theorem}{Theorem}[section]
\newtheorem{corollary}{Corollary}[section]

\newcommand{\problem}[1]{\textcolor{red}{\textbf{(P{#1})}}}
\newcommand{\solved}[1]{\textcolor{blue}{\textbf{(P{#1})}}}

\title{\textbf{LACToS}: \textbf{L}ocal \textbf{A}nomaly detection via \textbf{C}onditional \textbf{T}ypicality of \textbf{S}uperpixels}

\author{
  Jihoon Hong$^{1}$\thanks{Equal contribution.},
  Alireza Kheirandish$^{1}$\footnotemark[1],
  Rohan Sanda$^{2, 3}$ \& Sara Fridovich-Keil$^{1}$ \\
  $^{1}$Georgia Institute of Technology, Atlanta, GA, USA \\
  $^{2}$Stanford University, Stanford, CA, USA \\
  $^{3}$Cohere \\
  \texttt{\{jhong392,akheirandish3,sfk\}@gatech.edu} \\
  \texttt{rsanda@stanford.edu}
}

\iclrfinalcopy
\begin{document}

\maketitle

\begin{abstract}
Local anomaly detection is the task of identifying spatially confined anomalous or out-of-distribution (OOD) regions within an otherwise in-distribution (ID) image.
Most existing methods are designed for a specific context, with a strong focus on industrial defect detection, where they achieve excellent performance on well-established benchmarks.
However, we observe that these methods can exhibit significant performance drops in other contexts, especially when the inlier distribution is multimodal.
To account for this inlier complexity, we propose \name{}, a principled local OOD detection method based on statistical typicality and made tractable through two innovations: (i) grouping pixels into superpixels and (ii) using a computationally efficient proxy to estimate typicality.
We evaluate \name{} across three diverse applications spanning medical imaging, human faces, and X-ray security screening.
\name{} performs more robustly than prior methods across these benchmarks.
The code is released here: \href{https://github.com/voilalab/LACToS}{https://github.com/voilalab/LACToS}.
\end{abstract}

\section{Introduction}
\label{introduction}

\begin{wrapfigure}{R}{0.6\textwidth}
\centering
\vspace{-5mm}
\includegraphics[width=\linewidth, trim=1.6cm 0cm 0cm 0.1cm, clip]{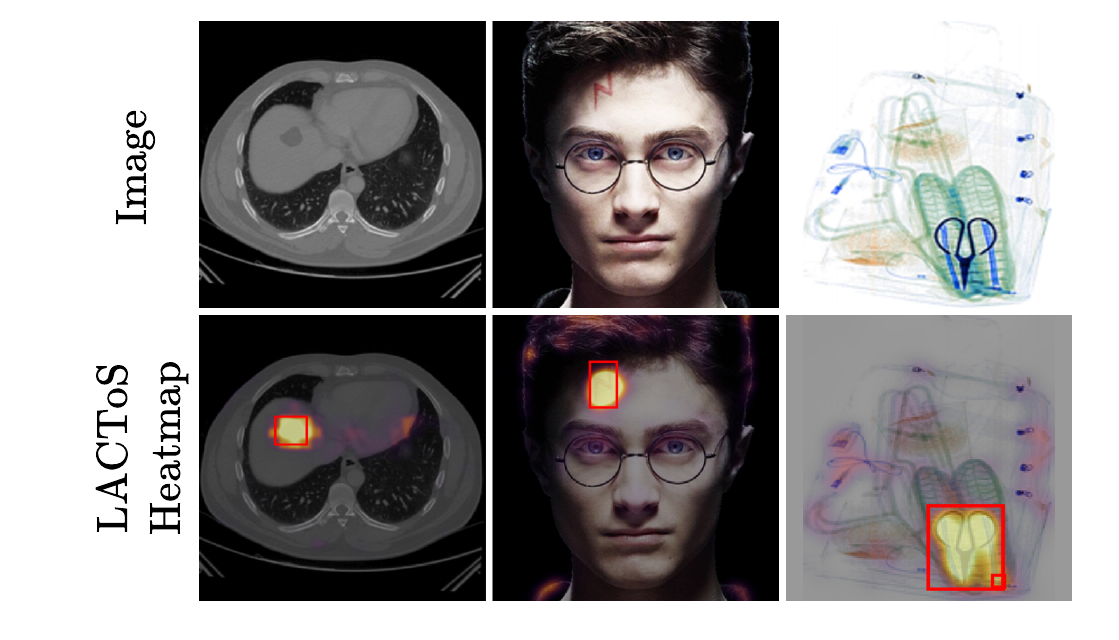}
\vspace{-10mm}
\caption{\label{fig:teaser} Using a diffusion model pretrained on ID images only, \name{} is able to identify locally OOD regions in images across various tasks, including tumors in CT scans, anomalies such as scars on faces, and harmful objects such as scissors in X-ray baggage scans.}
\vspace{-2mm}
\end{wrapfigure}

Local anomaly detection is the task of identifying out-of-distribution (OOD) regions that are spatially confined within an otherwise in-distribution (ID) image.
It is central to applications such as industrial defect inspection, medical image analysis, and security screening, where an anomalous region may occupy only a small fraction of the image yet indicate a defect, tumor, or weapon.
Unlike image-level OOD detection, which assigns a single anomaly score to an entire image, localized pixel-level anomaly detection involves generating a dense heatmap that indicates which pixels or regions of each outlier image are inconsistent with the inlier distribution.
This makes the problem especially challenging: a local OOD detector must be sensitive to subtle local deviations while remaining robust to natural variation in normal image content.

Prior work on localized anomaly detection has largely followed representation-based and reconstruction-based approaches.
Representation-based methods extract local features from normal (inlier) training images, and score test patches using their distance from the learned distribution or their likelihood under it \citep{spade, padim, patchcore}.
Reconstruction-based methods instead reconstruct, inpaint, or denoise the target image under a prior for inlier images, and use the discrepancy between the input and its reconstruction as an anomaly signal \citep{ddad, MDPS}.
Although both approaches have achieved strong empirical performance, especially for industrial defect detection \citep{mvtec}, their anomaly scores are often heuristic: representation-based methods depend on the choice of feature and distance metric, while reconstruction-based methods typically collapse the distribution of inlier completions into a point estimate.
Conformal and risk-control methods can provide statistical guarantees for calibrated segmentation outputs \citep{conformaldefect}, but they generally rely on a large amount of OOD calibration data.
Moreover, we empirically observe that these methods do not always generalize reliably beyond industrial defect detection.

We present \name{}, a statistical method for local OOD detection that is designed to address these limitations of existing methods. Our approach is fundamentally reconstruction-based, but uses the key insight that the distribution of the reconstructions contains valuable information that is lost when reconstruction error is computed using a single sample or an aggregated point estimate.
To leverage this richer distributional information towards improved local anomaly detection, we propose an anomaly metric based on statistical typicality, a measure of how representative a sequence of observations is for a given distribution.
Our use of typicality is inspired by previous typicality-based OOD detection methods \citep{nalisnicktypicality, osadatypicality, barkley2026scoped}.
However, these works are focused on image-level OOD detection where an entire image is classified as OOD, and to the best of our knowledge, using typicality for local anomaly detection has not yet been explored.

We begin with the typicality score of \citet{nalisnicktypicality}, and adapt it to identify local rather than whole-image anomalies (\Cref{subsec:typicality}).
Instead of asking whether an entire image is typical with respect to the unconditional prior distribution, we ask whether each pixel is typical with respect to the \emph{inpainting posterior} conditioned on all other pixels in the image. 
However, this substitution induces three major challenges: (i) failure to detect larger local anomalies that span multiple contiguous pixels, (ii) loss of statistical validity, because typicality testing requires sampling a sequence of i.i.d. observations, and (iii) prohibitive computational cost of performing posterior sampling separately for each pixel.

\name{} tackles these challenges through two innovations: we partition images into superpixels and replace pixel-level typicality with superpixel-level typicality (\Cref{subsec:superpixel_level_typicality}), and we use a single \emph{denoising posterior} as a common proxy for all inpainting posteriors. Our use of superpixels addresses challenges (i) and (ii), because a local OOD feature is more likely to be captured by a superpixel than a single pixel, and the individual pixels in each superpixel can be treated as independent observations to evaluate superpixel-level typicality. Our use of denoising rather than inpainting addresses challenge (iii), by dramatically reducing the number of posterior samples needed to estimate typicality.
Through these innovations, we show that \name{} achieves strong quantitative and qualitative performance across a wide range of local OOD detection tasks including detecting tumors in CT scans, detecting unusual facial accessories in photographs, and detecting weapons in X-ray baggage scans.

To summarize, we propose \name{}, a statistical approach to local OOD detection in images using an inlier diffusion model and typicality testing. Our method (1) 
is grounded in statistical typicality, and (2) is competitive with, and often more precise than, existing methods across diverse local OOD detection tasks with complex inlier distributions.
\section{Preliminaries}
\label{sec:preliminaries}

In \name{}, we treat a diffusion model as an oracle denoiser that samples images consistent with its training distribution and a noisy input image.
We provide a brief introduction to diffusion models and their use in posterior sampling in Appendix \ref{sec:background}. 
Here, we provide context on local OOD detection methods proposed in prior work, some of which inspired our use of statistical typicality in \name{}.

OOD detection is studied at both the \emph{image} and \emph{pixel} (local) levels. 
Image-level OOD detection assigns a single anomaly score to an entire image, with the goal of identifying images that are not drawn from the inlier distribution seen during training.
In contrast, we focus on pixel-level OOD detection, in which the goal is to produce a dense anomaly heatmap to localize anomalous regions within each image.
Formally, given an image $\xstar$, the goal is to estimate the ground truth anomaly mask $\mstarpixel$.
Previous works \citep{ddad,anoddpm,padim} define an anomaly score $f(\xstar,j)$ that captures how likely each pixel $j\in\J$ is to be OOD, where $\J$ is an index set of size $W\times H$ over all pixels.
Then, they compute an anomaly heatmap $A$ with elements $A_j=f(\xstar,j)$ for each pixel, fix some threshold $\tau$, and set ${\mhatpixel}=\mathbbm{1}\{A\geq\tau\}$.
Many of these works, especially those using diffusion models, are reconstruction-based: they generate an in-distribution reconstruction of the input using posterior sampling and use discrepancies such as $\ell_1$, $\ell_2$, SSIM \citep{ssim}, LPIPS \citep{lpips}, or feature-space distances as anomaly scores \citep{RIAD, MDPS, ddad, InTra}. 
Although effective, these methods often collapse the distribution of plausible reconstructions into a first-order statistic, such as the mean reconstruction error, which does not capture the full reconstruction distribution and can reduce robustness to complex inlier distributions.

Meanwhile, distribution-aware statistics have been explored across several image-level OOD detection methods. 
For example, EigenScore \citep{eigenscore} uses the eigenvalue spectrum of the posterior covariance, and DDPM-OOD \citep{ddpmood} aggregates standardized reconstruction statistics.
There are also typicality-based methods that compare the average negative log-likelihood of the samples with the entropy of the inlier distribution \citep{nalisnicktypicality, osadatypicality, barkley2026scoped}. 
Specifically, let 
$(x^{(1)},\ldots,x^{(N)})$ be independent draws from the inlier distribution $p$. 
The $(\epsilon,N)$-typical set is defined as
\begin{equation}
\label{eq:typical_set}
    \mathcal{T}^{p}_{\epsilon,N}
    =
    \left\{
    (x^{(1)},\ldots,x^{(N)})
    \;\middle|\;
    \left|
    -\frac{1}{N}\log p(x^{(1)},\ldots,x^{(N)})
    -
    \entropy(p)
    \right|
    \leq \epsilon
    \right\},
\end{equation}
where $\entropy(p)=\mathbb{E}_{x\sim p}[-\log p(x)]$
is the entropy of $p$. 
By the Asymptotic Equipartition Property \citep{coverthomas},
\begin{equation}
\label{eq:aep}
    -\frac{1}{N}\log p(X^{(1)},\ldots,X^{(N)})
    \xrightarrow[N\rightarrow\infty]{}
    \entropy(p).
\end{equation}

For image-level OOD detection, \citet{nalisnicktypicality} estimate the deviation from typicality as
\begin{equation}
\label{eq:typicality}
    \hat{\epsilon}(x^{(1)},\ldots,x^{(N)})
    :=
    \left|
    -\frac{1}{N}\sum_{n=1}^{N}\log p(x^{(n)})
    -
    \entropy(p)
    \right|,
\end{equation}
and classify a set of samples as OOD when $\hat{\epsilon}$ exceeds a threshold. 
Typicality has been shown to be a more effective image-level OOD detection metric than likelihood, especially in high dimensions \citep{nalisnicktypicality, osadatypicality}. 
\section{Motivation}
\label{sec:motivation}

\begin{wrapfigure}{R}{0.55\textwidth}
\centering
\vspace{-5mm}
\includegraphics[width=\linewidth]{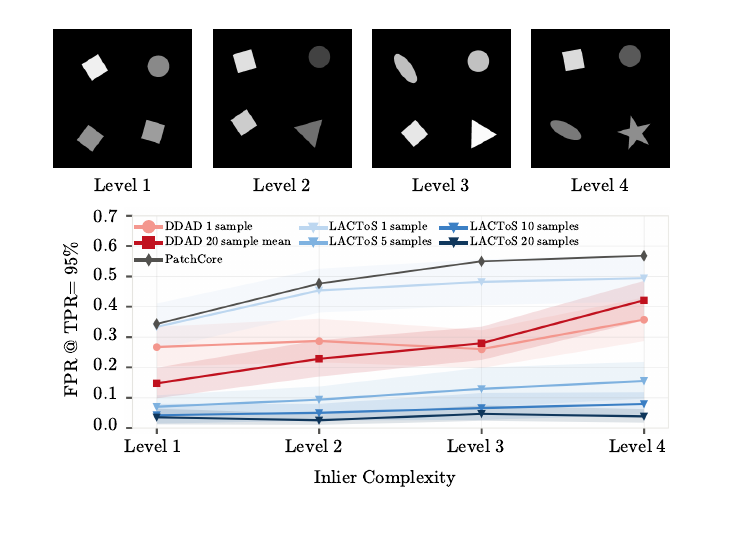}
\vspace{-12mm}
\caption{\label{fig:toy_experiment_fpr} \textbf{Multimodality Toy Experiment.} (Top) Images from the inlier distributions across different complexity levels. The top-right corner always contains a circle, while other corners can contain different shapes, with a wider variety of shapes at higher levels. (Bottom) False positive rates at true positive rate of 95$\%$, evaluated on images that contain anomalous squares in the top-right corner.}
\vspace{-2mm}
\end{wrapfigure}

The local OOD detection problem has been extensively studied in the context of industrial defect detection.
Prior works such as PatchCore \citep{patchcore}, EfficientAD \citep{efficientad}, GLASS \citep{glass}, and diffusion-based DDAD \citep{ddad} achieve excellent performance on widely used benchmarks such as MVTec AD \citep{mvtec} and VisA \citep{visa}.
However, we observe that these methods do not extend robustly to other OOD detection tasks, such as tumor detection, facial anomaly detection, and security screening using X-ray images, as shown in \Cref{tab:main_all_pooled}.
One potential cause of this brittleness is the complexity of the inlier distribution.
Industrial defect-detection datasets often have very structured inlier distributions, as products without defects are visually near-homogeneous, whereas CT scans, images of human faces, and X-ray baggage scans are far more heterogeneous even in the absence of anomalies.
Moreover, defects in products are context-agnostic: a defective feature, such as a crack or fracture, is anomalous regardless of where in the image or product it occurs.
In contrast, a dark region in a CT scan may be innocuous normal anatomy in one location, or indicative of a tumor in another.

Specifically, when the conditional distribution over a local region given the rest of the image is multimodal, methods that rely on a first-order statistic may erroneously identify samples from a different inlier mode as anomalous. 
To illustrate this phenomenon through a toy experiment, we construct four inlier distributions with increasing complexity levels, and train a diffusion model on each inlier distribution. Each image contains four shapes, one per quadrant. 
In the upper-right quadrant, every inlier image contains a circle, which allows us to define all other shapes placed in that quadrant as local anomalies. 
The other three quadrants can contain a wider variety of shapes, as shown in the top row of \Cref{fig:toy_experiment_fpr}, with the candidate shapes at each level occurring with equal probability. From level 1 to level 4, the set of shapes that can be placed in these three quadrants progressively grows, which increases the number of modes of the inlier distribution. 
We then introduce local anomalies in test images by replacing the top-right circle with a square, and evaluate two baseline methods, DDAD \citep{ddad} and PatchCore \citep{patchcore}, on how well they can correctly identify only the square as an anomaly. 
The bottom row of \Cref{fig:toy_experiment_fpr} demonstrates that at a per-image true positive rate (TPR) of 95$\%$, the false positive rate (FPR) increases with inlier multimodality for both baseline methods. This trend holds across the TPR values shown in \Cref{fig:fpr_at_tpr}.
For more information on the experimental setup, please refer to \Cref{sec:toy_experiment_1}.

\section{Method}
\label{sec:methods}

To tackle this problem, we propose \name{}, a typicality-based pixel-level OOD detection method using diffusion models, which is much more robust to inlier multimodality.
In \Cref{subsec:typicality}, we first describe an idealized extension of an existing typicality-based metric for image-level OOD detection to pixel-level anomaly localization, and several issues that limit its direct application. Then, in \Cref{subsec:superpixel_level_typicality} and \Cref{subsec:denoising} we introduce two key techniques, superpixels and a denoising proxy, that help overcome these issues and allow for an effective and tractable approximation to pixel-level typicality. In \Cref{subsec:feature}, we additionally incorporate extracted pixel-level features which can provide more informative statistical context than color alone.

Throughout, we use $\xstar$ to denote the test image and subscripts to identify specific pixels or superpixels. Similarly, we use $\x$ (again with appropriate subscripts) to denote the corresponding random variable under the prior or posterior, with Monte Carlo realizations indexed by superscripts.

\subsection{Typicality-based Anomaly Score}
\label{subsec:typicality}

A natural extension of the existing typicality-based OOD detection metric in \Cref{eq:typicality} from images to pixels is conditional typicality.
It is defined as
\begin{equation}
\label{eq:typicality_conditional}
    f^{\mathrm{inp}}(\xstar,j):=
    \big|-\log{\pj{\xstarj}-}\entropy(\pj{\xj})\big|,
    \qquad
    j\in\J,
\end{equation}

where the superscript $\mathrm{inp}$ stands for inpainting, $\xstarj$ and $\xstarnotj$ are the values of pixel $j$ and all remaining pixels in the test image and all other pixels, respectively,
and $\pj{\xj}$ is the distribution of prior-consistent values for pixel $j$ when all other pixel values are fixed at their observed values $\xstarnotj$.
Intuitively, we define a pixel in $\xstar$ to be OOD if it does not belong to the typical set with respect to this \emph{conditional} data distribution $\pj{\xj}$, given all other pixel values.

Unfortunately, this direct extension of typicality to conditional distributions in \Cref{eq:typicality_conditional} is not a practical score for anomaly detection for the following reasons.
\problem{1} Due to the spatial locality of pixel intensities and the smoothness of natural images, the value of a pixel given all others is often strongly constrained by nearby pixels.
With conditional distributions that are concentrated on a tiny set determined by the neighboring pixels, the typicality score is unlikely to capture OOD features that span multiple adjacent pixels.
\problem{2} In addition, a single pixel value is a sequence of length 1, which is insufficient for the theoretical guarantees of typical sets (\Cref{eq:aep}) to hold.
Indeed, the typicality test of \citet{nalisnicktypicality} has been shown to exhibit poor empirical performance on single samples \citep{typicalityfail}.
\problem{3} It is also computationally demanding to compute the anomaly heatmap $A$ by approximating \Cref{eq:typicality_conditional}, as $f^{\mathrm{inp}}(\xstar,j)$ has to be evaluated for every pixel $j\in\J$. Using $N$ posterior samples for a Monte Carlo estimate $\hat{p}(\xj|\xstarnotj)$ of $\pj{\xj}$ for each pixel
would require a total of $N\times|\J|$ posterior samples, which is computationally prohibitive at practical image resolution $H,W$.
\problem{4} Finally, raw RGB values often do not capture sufficient information about local image contents, including semantic information, which can lead to subpar performance in anomaly detection tasks.

\subsection{Superpixel-level Typicality}
\label{subsec:superpixel_level_typicality}

\begin{wrapfigure}{R}{0.3\textwidth}
\centering
\vspace{-8mm}
\includegraphics[width=\linewidth]{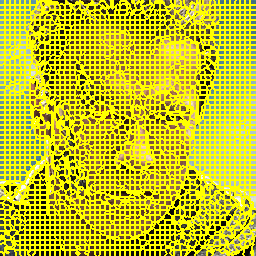}
\vspace{-6mm}
\caption{\label{fig:superpixels}A fine-grained superpixel decomposition of an example image.}
\vspace{-3mm}
\end{wrapfigure}
Our first key insight is that superpixels \citep{superpixel_segmentation, slic, superpixel_survey}, groups of spatially contiguous pixels that share similar color,
can help us address \problem{1} and \problem{2}, by grouping OOD pixels together into small groups that are easier to detect and can be treated as a sequence to estimate typicality. The latter requires Assumption~\ref{assumption:conditional_independence}, which captures the intuition that, within each superpixel, individual pixel values are independent and identically distributed. An example of superpixels is visualized in \Cref{fig:superpixels}.

Concretely, we define $\spxs=\{\spx{k}\subset\J\mid k\in\K\}$ to be a fine-grained superpixel partition of the pixels in $\J$, which can be obtained using off-the-shelf superpixel algorithms \citep{slic, seeds, ers, lsc}. Here $k$ indexes superpixels and $S_k$ is the set of pixels in superpixel $k$.
Let $\xspxk$ denote image pixels in superpixel $k$, and $\xspxnotk$ denote all other pixels. 

\begin{assumption}
\label{assumption:conditional_independence}[Conditionally i.i.d. pixel values within each superpixel]
Given $\xstar$ and $\spxs$, there exists a single-pixel distribution $\qk$ associated with each superpixel $k \in \K$ such that
    \begin{equation*}
        \quad
        \xj\mid\xstarspxnotk\overset{iid}{\sim}\qk
        \quad \forall j\in\spxk.
    \end{equation*}
\end{assumption}
Under Assumption~\ref{assumption:conditional_independence}, we define the \emph{superpixel-level typicality score} $g^{\mathrm{inp}}$ as
\begin{equation}
\label{eq:typicality_superpixel}
    g^{\mathrm{inp}}(\xstar,k)
    =
    \frac{1}{|\spxk|}
    \Big|
        -\log{\pk{\xstarspxk}}
        -\entropy(\pk{\xspxk})
    \Big|
    =
    \Bigg|
        -\frac{1}{|\spxk|}
        \sum_{j\in\spxk}\log{\qk(\xstarj)}
        -\entropy(\qk)
    \Bigg|.
\end{equation}
where $\pk{\xspxk}$ is the conditional distribution of the superpixel given all other pixel values, and the last equality holds due to the conditional independence assumption (Assumption~\ref{assumption:conditional_independence}) on the pixels in $\spxk$.
We can then compute the pixel-level anomaly heatmap $A$ by estimating the typicality score in \Cref{eq:typicality_superpixel} using Monte Carlo sampling. 
For each superpixel $k$, this involves drawing $M$ samples $x^{(1)},\ldots,x^{(M)}$ from the inpainting distribution 
$\pinpk:=p(\x\mid\xstarspxnotk)$,
extracting pixels $\{x^{(m)}_j\}_{j\in \spxk,m\in[M]}$, fitting a histogram $\hat{q}_k^{\mathrm{inp}}$ to estimate $\qk$, and computing the estimate $\hat{g}^{\mathrm{inp}}(\xstar,k)$. Finally, for pixel-level scores it is natural to duplicate the same score over all pixels $j\in\spxk$, i.e. $\hat{f}^{\mathrm{inp}}(\xstar,j)=\hat{g}^{\mathrm{inp}}(\xstar,k)$ for all $j\in\spxk$, and apply Gaussian smoothing to smooth out the hard boundaries between superpixels.

Estimating the superpixel-level typicality score $g^{\mathrm{inp}}$ rather than the pixel-level typicality score $f^{\mathrm{inp}}$ helps address these problems. 
\solved{1} By inpainting a group of similar pixels instead of a single pixel, $g^{\mathrm{inp}}$ can better identify local OOD features that span multiple pixels than $f^{\mathrm{inp}}$.
\solved{2} Under Assumption~\ref{assumption:conditional_independence}, we are given a sequence of $|\spxk|$ independent observations $\{\xstarj\}_{j\in\spxk}$ for $\qk$, which facilitates computing a typicality score for each superpixel.
\solved{3} In terms of computational cost, the number of anomaly scores to evaluate is reduced from the  number of pixels $|\J|$ to the number of superpixels $|\K|$.
We can also use a much smaller number of samples $\M\ll N$ to compute the histogram $\hat{q}_k$ for $\qk$, since each posterior sample from $\pinpk$ translates to $|\spxk|$ samples from $\qk$.

\newcommand{\plotcrop}[1]{%
\adjincludegraphics[trim={{0.0\width} {0.06\height} {0.0\width} {0.065\height}}, clip, width=\linewidth]{#1}%
}

\begin{figure}[t]
  \centering
  \plotcrop{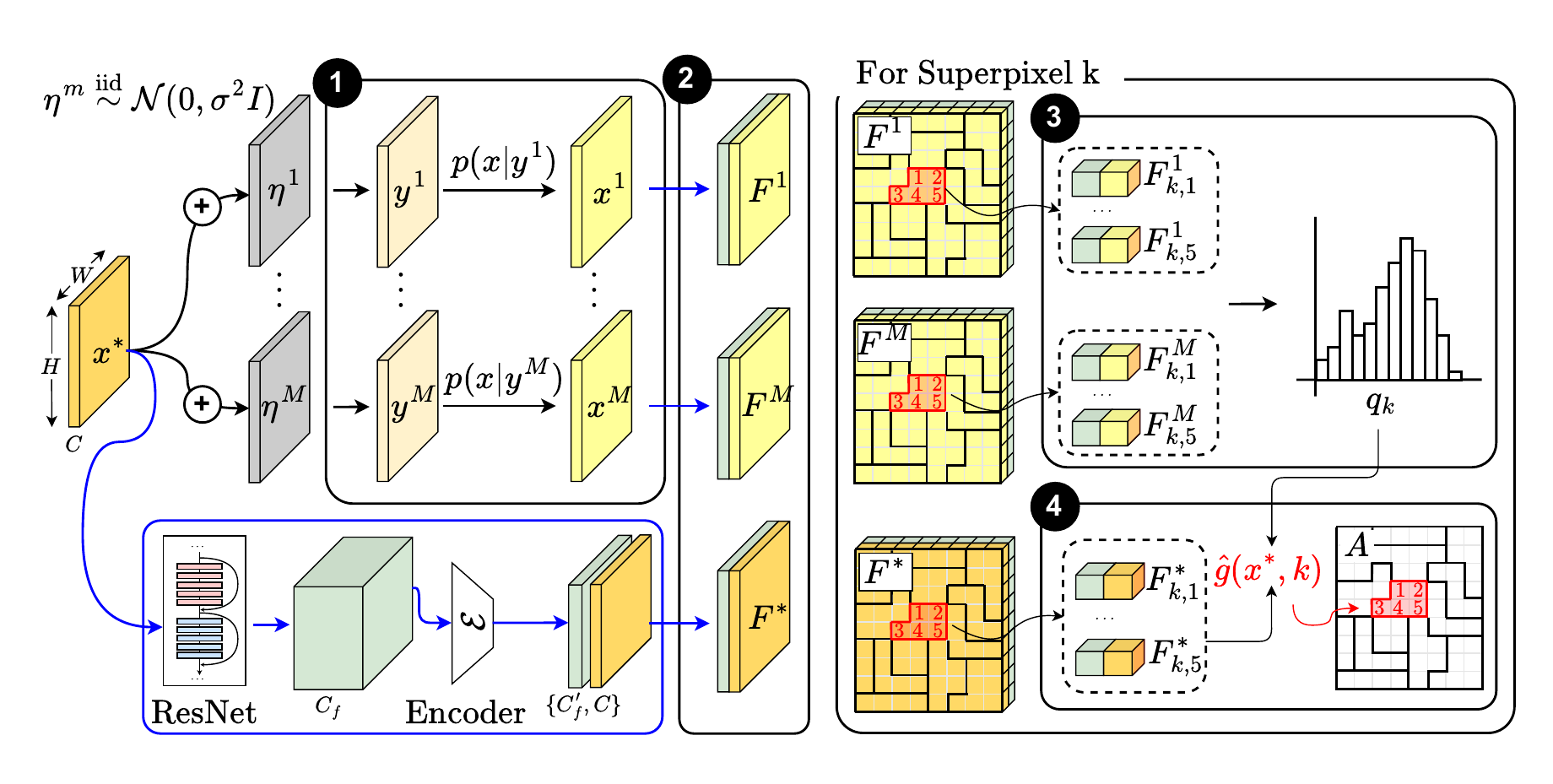}
  \vspace{-5mm}
  \caption{\textbf{\name{} Overview.} \textbf{(1) Posterior Sampling.} We sample noise independently from an isotropic Gaussian distribution and add it to the test image $\xstar$ to create noised versions of $\xstar$ denoted by $y^{m}$. Then, using diffusion-based posterior sampling methods, we sample reconstructions $x^{m}\sim p(x|y^{m})$. \textbf{(2) Feature Extraction.} For $\xstar$ and each sample $x^m$, we pass the image through a pretrained ResNet model, extract activations, and compress them using an encoder to create a feature vector of dimension $C_f'+C$ for each pixel, where $C$ is the original channel dimension. \textbf{(3) Histogram Fitting}. For a given superpixel $k$, we gather the features of the reconstructed pixels in the superpixel, and construct $\hat{q}_k^{\mathrm{den}}$, a $C_f'+C$-dimensional histogram. \textbf{(4) Typicality Computation.} Finally, using the fitted histogram, we compute the typicality of the pixel features in $\xstar$ and report the anomaly heatmap $A$.}
  \vspace{-3mm}
  \label{fig:methods}
\end{figure}

\subsection{Denoising Posterior}
\label{subsec:denoising}

Although superpixel-level typicality reduces the required number of posterior samples from $|\J|N$ to $|\K|\M$, the cost remains prohitive for fine superpixel partitions.
For instance, \Cref{fig:superpixels} shows a 256$\times$256 image partitioned into 2,600 superpixels (approximately 25 pixels per superpixel on average), which would require a total of at least 2600 posterior samples using a diffusion model.
This cost could be reduced by partitioning the image into fewer, larger superpixels, but these may struggle to localize small anomalies and fail to satisfy Assumption~\ref{assumption:conditional_independence}.

We overcome this problem \problem{3} by estimating $\qk$ for all $k\in\K$ simultaneously using a small number of posterior samples.
Instead of using different samples from each of the \emph{inpainting conditional distributions} $\{\pinpk\}_{k\in\K}$, we use the same $\M$ samples from the \emph{denoising conditional distribution}, defined as 
\begin{equation}
    \pden(\x):=\E_{\eta\sim\mathcal{N}(0,\sigma^2I)}\big[p(\x\mid \xstar+\eta)\big].
\end{equation}
For each $\spxk$, we pool the pixel values across these samples to form $\{\xmj\}_{j\in\spxk,m\in[M]}$, which we use to fit a histogram for $\qk$, and compute the anomaly score. 

Intuitively, $\pden$ serves as a shared relaxation of the superpixel-specific inpainting posteriors $\{\pinpk\}_{k\in\K}$.
In each $\pinpk$, the complement $\xstarspxnotk$ is fully observed while $\xstarspxk$ is left unobserved.
For the denoising posterior $\pden$, there is a tension between information on $\xstarspxnotk$ and lack of information on $\xstarspxk$, controlled by the noise level $\sigma$. With increasing $\sigma$, we lose more information about $\xstarspxk$, which is desirable, but we also lose information about the context $\xstarspxnotk$.
The following theorem informally bounds the error introduced by replacing the inpainting posterior with the denoising posterior. Appendix \cref{app:theory} states and proves the formal result and gives bounds without Assumption~\ref{assumption:conditional_independence}.

\begin{theorem}
\label{thm:main}
\textbf{(Informal).} Let $g^{\mathrm{inp}}(\xstar,k)$ and $g_\sigma^{\mathrm{den}}(\xstar,k)$ be the anomaly scores obtained using the ideal inpainting distribution and the proxy denoising distribution with a noise distribution of $\mathcal{N}(0,\sigma^2I)$, respectively. Under appropriate assumptions, the error can be bounded as
    \begin{equation*}
        \left|g^{\mathrm{inp}}(\xstar,k) - g_\sigma^{\mathrm{den}}(\xstar,k)\right|\leq \alpha_k\sigma + \beta_k\sigma^{-\gamma_k}
    \end{equation*}
for some positive constants $\alpha_k$, $\beta_k$, and $\gamma_k$.
\end{theorem}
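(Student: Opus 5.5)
The plan is to reduce the score gap to a distance between the single-pixel marginals, and then split that distance into a bias that grows with $\sigma$ and one that shrinks with $\sigma$.

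\textbf{Reduction to marginals.} By \Cref{eq:typicality_superpixel}, both scores have the form $\bigl|-\frac{1}{|\spxk|}\sum_{j\in\spxk}\log q(\xstarj)-\entropy(q)\bigr|$, evaluated at $q=\qk$ and at $q=q_k^{\mathrm{den}}$ respectively. Here $q_k^{\mathrm{den}}$ is the pooled single-pixel marginal of $\pden$ over $\spxk$. The reverse triangle inequality then gives
\[
\bigl|g^{\mathrm{inp}}-g^{\mathrm{den}}_\sigma\bigr|\le \max_{j\in\spxk}\bigl|\log \qk(\xstarj)-\log q_k^{\mathrm{den}}(\xstarj)\bigr| + \bigl|\entropy(\qk)-\entropy(q_k^{\mathrm{den}})\bigr|.
\]
I will assume both densities are bounded below by some $c_k>0$ on a compact pixel range, which is natural for finite-resolution histograms. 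Under that assumption both terms are controlled by $\|\qk-q_k^{\mathrm{den}}\|_\infty$, or by a total-variation distance via a continuity bound for entropy. It therefore suffices to bound the distance between the two marginals by $\alpha\sigma+\beta\sigma^{-\gamma}$.

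\textbf{Splitting into two errors.} Write $\eta=(\eta_{S_k},\eta_{-S_k})$ and $y=\xstar+\eta$. The denoising posterior factors as $p(x\mid y_{S_k},y_{-S_k})$, which I compare with an intermediate distribution $p(x\mid y_{-S_k})$ in which the in-superpixel observation is dropped. Two errors then appear.

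\emph{(a) Context corruption.} Compare $p(x_{S_k}\mid \xstarspxnotk+\eta_{-S_k})$ with $p(x_{S_k}\mid\xstarspxnotk)$. I will assume the conditional is Lipschitz in the conditioning variable, with constant $L_k$, in the chosen metric. Then this error is at most $L_k\,\E\|\eta_{-S_k}\|$. Only pixels near $S_k$ should matter, so I will additionally assume locality of dependence to get a dimension-independent constant, giving error $\le\alpha_k\sigma$.

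\emph{(b) Leakage from the noisy superpixel.} Compare $p(x_{S_k}\mid y_{S_k},y_{-S_k})$ with $p(x_{S_k}\mid y_{-S_k})$. By Bayes, the first is the second reweighted by the likelihood $\exp(-\|y_{S_k}-x_{S_k}\|^2/2\sigma^2)$, normalized. I will bound the divergence between the two by the information that $y_{S_k}$ carries about $x_{S_k}$ at noise level $\sigma$. A Gaussian-channel computation gives mutual information at most $\frac{|S_k|}{2}\log(1+s_k^2/\sigma^2)\le \frac{|S_k|s_k^2}{2\sigma^2}$, where $s_k^2$ bounds the per-pixel variance. Pinsker then turns this into a TV bound of order $\sigma^{-1}$. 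More generally, I expect a $\sigma^{-\gamma_k}$ rate, with $\gamma_k$ depending on the regularity of the prior and the divergence used. This explains why the theorem allows a general $\gamma_k$.

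\textbf{Conclusion and main obstacle.} Averaging over $\eta$, by convexity of the distance, and combining (a), (b) and the reduction step yields $\alpha_k\sigma+\beta_k\sigma^{-\gamma_k}$. The constants absorb $c_k^{-1}$, $L_k$, $|S_k|$ and $s_k$.

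The hardest step is (b). Mutual information bounds the \emph{expected} divergence over $y_{S_k}$, but the score is evaluated pointwise at the fixed test image. Since $\xstar$ may itself be anomalous, the likelihood tilt could be extreme there. I would first try to handle this by averaging over $\eta$ as in the definition of $\pden$, and by requiring the bounded-density assumption so that the tilt can be controlled uniformly. I would fall back to a Laplace-type estimate of the tilted normalizer, where a smoothness exponent would produce $\gamma_k$.
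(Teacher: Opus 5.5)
You have a genuine gap, and it lies in the part of the argument the paper never attempts.

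\textbf{What matches the paper.} Your reduction to single-pixel marginals (reverse triangle inequality, then Lipschitz control of $\log$ and $\entropy$ under a probability floor) and the convexity step over $\eta$ are exactly the paper's argument. The paper derives nothing beyond that. Assumption~\ref{assumption:tradeoff} simply posits \eqref{eq:context_gap} and \eqref{eq:self_leakage}. Its intermediate $q^{\mathrm{aux}}_{k,y}$ is built from $p(x_j\mid\xstarspxnotk,y_{S_k})$, i.e.\ clean context with the noisy superpixel revealed. Yours, $p(x\mid y_{-S_k})$, is the opposite choice: noisy context with the superpixel hidden. The gaps are in your attempt to derive the two rates from more primitive assumptions.

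\textbf{The gap in (b).} Write $\phi_\sigma$ for the $\mathcal{N}(0,\sigma^2I)$ density. The Gaussian-channel bound controls the KL averaged over $y_{S_k}$ drawn from the model's predictive law $\sum_{x}p(x_{S_k}=x\mid y_{-S_k})\,\phi_\sigma(y_{S_k}-x)$. By contrast, $q^{\mathrm{den}}_k$ averages over $y_{S_k}=\xstarspxk+\eta_{S_k}$ with the test values held fixed, so ``averaging over $\eta$'' uses the wrong measure. Passing from one measure to the other needs a density-ratio bound. The natural one is $1/p(\xstarspxk\mid y_{-S_k})$, which is a floor on the \emph{joint} conditional probability of the (possibly anomalous) test superpixel given a noisy context. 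Your single-pixel floor on $\qk$ and $q^{\mathrm{den}}_k$ does not supply this. With your intermediate, Assumption~\ref{assumption:conditional_independence} cannot factorize it either, because that assumption concerns only the clean context.

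\textbf{Repair for (b).} Switch to the paper's intermediate. With the clean context fixed, Assumption~\ref{assumption:conditional_independence} and independent noise give $p(x_j\mid\xstarspxnotk,y_{S_k})\propto\qk(x_j)\,\phi_\sigma(y_j-x_j)$. This is a per-pixel Gaussian channel with input law $\qk$. Its predictive density satisfies $\sum_{v}\qk(v)\,\phi_\sigma(y_j-v)\ge c_k\,\phi_\sigma(y_j-\xstarj)$. Your MI--Pinsker computation then transfers to the fixed test pixel and gives $\E_\eta\|q^{\mathrm{aux}}_{k,y}-\qk\|_1\le c_k^{-1}\sqrt{\operatorname{tr}\operatorname{Cov}_{\qk}}\;\sigma^{-1}$. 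That is \eqref{eq:self_leakage} with $\gamma_k=1$, derived from your own floor.

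\textbf{The gap in (a).} The object $p(x_{S_k}\mid\xstarspxnotk+\eta_{-S_k})$ is a posterior given a \emph{noisy observation} of the context, not the clean conditional evaluated at a perturbed argument. It differs from $p(x_{S_k}\mid\xstarspxnotk)$ even when $\eta_{-S_k}=0$. So Lipschitzness in the observed value cannot yield $L_k\,\E\|\eta_{-S_k}\|$. If you write it as a mixture of clean conditionals over the context posterior, Lipschitzness gives $L_k\,\E\|x_{-S_k}-\xstarspxnotk\|$ under that posterior. That is a pointwise denoising error at the fixed test image, and it has the same averaging problem as (b). So the $O(\sigma)$ context gap must still be assumed, as the paper does in \eqref{eq:context_gap}.

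With these two changes your argument becomes a complete proof. It is slightly stronger than the paper's, since the self-leakage term is derived (from Assumption~\ref{assumption:conditional_independence}, bounded pixel range, and the floor) rather than posited.
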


\subsection{Feature Augmentation}
\label{subsec:feature}

One final problem remains \problem{4}: RGB values alone may not distinguish local ID and OOD features.
To address this issue, \solved{4} we follow the convention of previous works \citep{ddad, patchcore, padim} and augment RGB values with additional features.
Specifically, we extract the activations of the last three layers of a pretrained ResNet \citep{resnet} $\phi$, as later layers of convolutional neural networks are known to encode higher-level visual structure and have been widely used as transferable representations for downstream vision tasks \citep{cnnfeatures,cnnvisualization}.
The high activation dimension, denoted as $C_f$, can increase histogram estimation error when few samples are available.
We therefore use an autoencoder trained on each ID dataset to compress the features to dimension $C'_{f}(\ll C_f)$; in our experiments we use $C'_f=3$.

\paragraph{Algorithm Summary.} Our complete algorithm, illustrated in \Cref{fig:methods}, can be summarized as follows.
First, we generate $\M$ samples from the denoising conditional distribution $\pden$ through posterior sampling using a diffusion model.
Second, each of the samples is passed through $\phi$ to extract features, which are then compressed using the trained encoder $\mathcal{E}$.
Third, the pixel values and the compressed features are concatenated, resulting in 
\begin{equation*}
    u_{j}^m:=(x_{j}^m,\mathcal{E}(\phi(x^m)_j))\in\R^{C+C_f'}
\end{equation*}
for each pixel $j$.
Fourth, the augmented vectors $u_{j}^m$ that belong to each superpixel $\spxk$ are collected to form $\{u_j^{m}\}_{j\in\spxk,m\in[\M]}$ and used to construct a histogram $\hat{q}_k$, an estimate of the $C+C_f'$-dimensional distribution $\qk$.
The fourth step is repeated over all superpixels, after which we can compute the anomaly map by computing the typicality of $\xstar$ for each superpixel (\Cref{eq:typicality_superpixel}), and finally apply spatial Gaussian smoothing to refine pixel-level anomaly scores across superpixel boundaries.
\section{Evaluation}

\paragraph{Datasets.}
We evaluate \name{} on three local anomaly detection datasets spanning medical imaging, human faces, and X-ray security screening (\Cref{tab:dataset_summary}). In each case, the diffusion model in \name{} is trained only on ID images, and all methods are evaluated on a held-out test set. Each test set includes an equal number of ID images and images containing local OOD features.

\textbf{\chaos{}} \citep{chaos} provides abdominal CT scans of healthy patients. We train on 3{,}070 inlier slices and construct a test set containing 100 healthy test slices and 100 locally anomalous slices acquired by adding synthetic tumor to each.
Following common practice in CT image analysis \citep{bodymask1, bodymask2, bodymask3}, we apply the provided body mask at evaluation to exclude background pixels.
\textbf{\ffhq{}} \citep{ffhq} is a dataset of high-quality face images, which we treat as the inlier distribution. We train on a random subset of 5{,}000 images, and manually curate an OOD test set of 50 photographs of movie and television characters with prosthetics or atypical facial features (e.g., scars, metal implants), verified not to overlap with FFHQ. The ID test set contains 50 randomly selected held-out FFHQ images.
\textbf{\xray{}} \citep{dvxray} is a dataset of X-ray luggage scans, with an ID set of weapon-free bags and an OOD set containing weapons. We train on 5{,}000 inlier images and restrict the OOD test set to the \emph{scissors} category, where targets are small and spatially confined.
For randomly sampled scissor images, we used SAM~3~\citep{sam3} to convert the dataset's bounding-box annotations into segmentation masks, manually discarding images with visibly inaccurate masks for a final OOD test set of 72 images. We construct an ID test set that also contains 72 images, none of which contains a weapon.

\paragraph{Baselines \& Models.}

We compare \name{} against six strong baselines developed for industrial defect detection, spanning representation- and reconstruction-based methods.
\textbf{PatchCore} \citep{patchcore} builds a coreset-subsampled memory bank of ID patch features from a frozen ImageNet-pretrained backbone and scores test patches by nearest-neighbor distance.
\textbf{SimpleNet} \citep{simplenet} learns a feature adapter and a discriminator that separates adapted ID features from synthetic anomalies generated by injecting Gaussian noise in feature space.
\textbf{SuperSimpleNet} \citep{supersimplenet} extends SimpleNet with architectural and training improvements for fast, reliable surface defect detection.
\textbf{EfficientAD} \citep{efficientad} combines a lightweight student--teacher model with an autoencoder to detect local structural anomalies and violations of global logical constraints through feature discrepancies.
\textbf{GLASS} \citep{glass} trains a discriminator to separate ID features from synthetic anomalies generated using gradient ascent with truncated projection in feature space and local texture overlays in image space.
\textbf{DDAD} \citep{ddad} conditions a denoising diffusion model on the test image to produce an ID reconstruction and scores anomalies by feature-space discrepancy.

\name{} requires an unconditional diffusion model and a posterior sampler. For all benchmarks, we use the guided-diffusion U-Net of \citet{guideddiffusion} with diffusion posterior sampling (DPS) \citep{dps} at inference time. Full experimental details are provided in \Cref{sec:experiment_details} and in our code, which will be released upon publication. 

\begin{table}[t]
  \caption{
  Pooled quantitative performance in local anomaly detection.
  AUROC and AP are computed over all pixels in each dataset jointly.
  AUPRO is computed over the dataset using per-region overlap for
  $\mathrm{FPR} \leq 0.3$.
  Higher is better for all metrics.
  The best result is shown in \textbf{bold} and the second-best is
  \underline{underlined}.
  }
  \label{tab:main_all_pooled}
  \centering
  \small
  \setlength{\tabcolsep}{2.5pt}
  \resizebox{\linewidth}{!}{%
    \begin{tabular}{llccccccc}
      \toprule
      Dataset & Metric
      & PatchCore & SimpleNet & SuperSimpleNet & EfficientAD
      & GLASS & DDAD & \name{} (Ours) \\
      \midrule

      \multirow{3}{*}{\chaos{}}
      & AUROC & 0.9297 & 0.7464 & 0.3786 & 0.6664 & 0.8318 & \textbf{0.9707} & \underline{0.9620} \\
      & AP    & 0.0590 & 0.0193 & 0.0051 & 0.0105 & 0.0410 & \underline{0.3159} & \textbf{0.4318} \\
      & AUPRO & 0.6811 & 0.4191 & 0.0332 & 0.2513 & 0.5530 & \underline{0.8676} & \textbf{0.8764} \\
      \midrule
      \multirow{3}{*}{\ffhq{}}
      & AUROC & \underline{0.8491} & 0.7162 & 0.7647 & 0.6200 & 0.6707 & 0.7469 & \textbf{0.8515} \\
      & AP    & \underline{0.0798} & 0.0537 & 0.0772 & 0.0405 & 0.0345 & 0.0563 & \textbf{0.1177} \\
      & AUPRO & \underline{0.4094} & 0.3174 & 0.4047 & 0.2365 & 0.2440 & 0.3083 & \textbf{0.5320} \\
      \midrule
      \multirow{3}{*}{\xray{}}
      & AUROC & \textbf{0.9640} & 0.9234 & 0.8688 & 0.9184 & 0.8944 & 0.8567 & \underline{0.9607} \\
      & AP    & \underline{0.0545} & 0.0254 & 0.0165 & 0.0235 & 0.0155 & 0.0118 & \textbf{0.0568} \\
      & AUPRO & \underline{0.8141} & 0.6577 & 0.5351 & 0.6592 & 0.6062 & 0.5183 & \textbf{0.8179} \\

      \bottomrule
    \end{tabular}%
  }
\end{table}

\begin{figure}[t]
    \centering
    \includegraphics[width=1.0\textwidth, trim=1cm 0cm 0cm 1cm, clip]{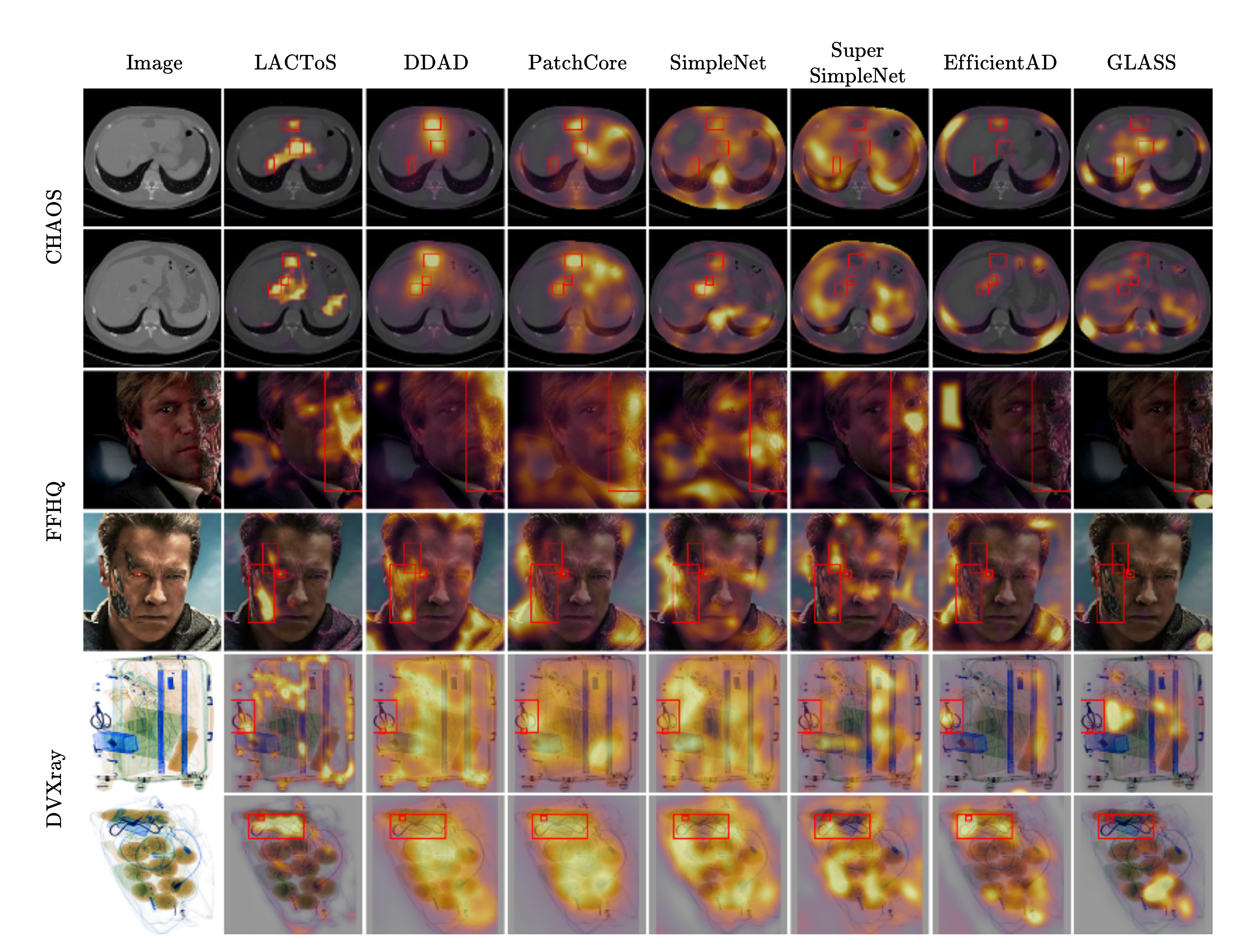}
    \vspace{-8mm}
    \caption{Anomaly heatmaps generated by each method on example images from each test dataset. Rows 1-2 (\chaos{}): detect liver tumors in CT scans. Rows 3-4 (\ffhq{}): detect anomalous facial features. Rows 5-6 (\xray{}): detect weapons (scissors) in X-ray baggage scans.}
    \label{fig:results}
    \vspace{-5mm}
\end{figure}

\paragraph{Results.}

Each method produces a heatmap of anomaly scores per test image. Following standard practice, we pool pixel-level anomaly scores across all test images and evaluate three complementary pixel-level detection metrics. 
\textbf{AUROC} captures the relative ordering of heatmap values against the ground-truth anomaly mask, but can reward low-contrast heatmaps that flag many ID pixels as nearly as anomalous as the true OOD pixels. 
\textbf{AP} summarizes precision across recall levels, weighting each precision value by the corresponding increase in recall.
\textbf{AUPRO} is the normalized area under the per-region overlap curve up to a false positive rate of $0.3$, where per-region overlap is the fraction of correctly detected pixels within each connected ground-truth anomalous region, averaged equally across regions. When different anomalies have different sizes, AUPRO weights all anomalies equally rather than overweighting large anomalies.

\Cref{tab:main_all_pooled} reports these metrics, which indicate that \name{} is competitive with baselines across all tasks, though the best method differs across metrics.
The example heatmaps in \Cref{fig:results} also suggest that \name{} tends to be more precise: it correctly marks OOD pixels but rarely flags ID pixels, at the cost of occasionally missing part or all of the OOD region.
The baselines instead produce blurrier heatmaps that recover the OOD region but also flag many ID pixels.

\paragraph{Ablation Studies.}
To verify the robustness of \name{}, we conduct ablation studies on several hyper-parameters, which we describe in detail in \Cref{sec:ablations}.
These include the number of posterior samples $M$ and the average size of each superpixel.
We observe that the performance of \name{} is mostly stable across different settings, demonstrating its robustness to hyperparameter choices.
In addition, we perform an ablation on the use of autoencoder-based features, which shows that the features can substantially improve performance compared to typicality based solely on RGB color, though this improvement is dataset-dependent.

\section{Discussion}

We present \name{}, a framework for general-purpose local anomaly detection.
We leverage a pretrained diffusion model to represent the inlier distribution, and combine it with a statistical approach that scores image regions by their typicality (or rather, atypicality) under the conditional distribution induced by their surrounding context.
Empirically, \name{} shows strong performance across diverse local OOD detection tasks ranging from medical imaging to security screening.

We note that since \name{} is more robust to inlier multimodality, it is also less likely to flag features from minority inlier modes as anomalous, especially when using a larger $M$.
We demonstrate this behavior through a toy experiment, and refer interested readers to \Cref{sec:toy_experiment_2}.
It is possible that some features annotated as anomalies in the benchmarks we evaluate may be similar to features present with low frequency in the inlier (training) datasets, causing \name{} to classify them as inliers and thus generate false negative predictions according to the dataset anomaly labels.
We therefore advise practitioners that, in order to maximize the accuracy of \name{}, it is important to ensure that the training set does not contain the anomalies one desires to detect.

\paragraph{Limitations and future work.}
Since \name{} targets general-purpose local anomaly detection, specialized approaches can outperform \name{} for specific highly structured datasets, such as those common in industrial defect detection. Please refer to \cref{tab:app_mvtec} for evaluation on MVTec AD \cite{mvtec}.
\name{} inherits the computational cost of diffusion posterior sampling, which is significantly higher per-image than feedforward representation-based methods such as PatchCore.
\name{} also relies on Assumption~\ref{assumption:conditional_independence}, which may be violated when superpixels are large or when the anomalous region spans multiple superpixels with heterogeneous pixel statistics.
\name{} requires several hyperparameters, such as the noise variance in the denoising posterior sampling, the quantization granularity and smoothing bandwidth in the histogram estimation, the number of posterior samples, the granularity of superpixels, and the spatial smoothing variance of the final heatmap. While we provide ablation studies showing \name{} is fairly robust to these hyperparameters, future work may find mechanisms to remove or automatically set optimal parameter values across tasks. Future work may also replace our histogram estimation with a learned or parametric density model, or extend our \name{} framework to video and 3D data types where spatiotemporal or volumetric superpixels could provide longer sequences for improved typicality estimation. 

\section{Aknowledgement}

This work was supported in part by the HIVES program at Georgia Tech Research Institute (GTRI).
{
    \small
    \bibliographystyle{plainnat}
    \bibliography{main}
}
\clearpage

\appendix

\section{Background}
\label{sec:background}
\subsection{Diffusion Models}
\label{subsec:diffusion_models}

Diffusion models are generative models that have gained enormous popularity for their high sample quality \cite{ddpm, guideddiffusion}.
To sample from a desired data distribution $p$, these models first define a noising process (also called the forward process) as a stochastic differential equation (SDE) \citep{scoresde} of the form
\begin{equation}
    dx=f(x,t)dt+g(x,t)dw,\qquad t=0\rightarrow 1,
\end{equation}
where $w$ is a standard Wiener process (Brownian motion) and $f$, $g$ are the drift and diffusion coefficients, respectively, which are determined by a chosen noise schedule.
If we denote the marginal distribution of $x$ at time $t$ as $p_t$, this process transforms clean data samples $x\sim  p_0:=p$ across time until the distribution becomes pure noise at $t=1$, i.e. $p_1=\mathcal{N}(0,I)$.
Once the forward process is defined, there exists, under regularity conditions, a corresponding backward SDE with drift $\tilde{f}$ and diffusion $\tilde{g}$ \citep{reversesde, scoresde}:
\begin{equation}
    dx=\tilde{f}(x,t)dt+\tilde{g}(x,t)d\bar{w},\qquad t=1\rightarrow 0,
\end{equation}
where $\bar{w}$ is also a Wiener process but with $t$ indexed backwards.
This backward SDE shares the same marginal distributions $p_t$ as the marginal distributions induced by the forward process for all $t\in[0,1]$.
Since the drift coefficient $\tilde{f}$ of the backward SDE has the form
\begin{equation}
    \tilde{f}(x,t)=f(x,t)-g(x,t)^2\nabla_x\log{p_t(x)},
\end{equation}
a diffusion model is optimized on either a score-matching loss to predict $\nabla_x\log{p_t(x)}$ \cite{scorematching, scoresde} or on a flow-matching loss to predict $\tilde{f}(x,t)$ directly \cite{flowmatching}, and used to iteratively sample from $p_t$ through discretization of the backward SDE.

\subsection{Posterior Sampling using Diffusion Models}
\label{subsec:posterior_sampling}
To condition generation on inference-time measurements using a pretrained diffusion model, prior works modify the reverse sampling process to approximately sample from the posterior distribution capturing simultaneous consistency with the prior encoded by the diffusion model and the likelihood induced by the measurement model \cite{scoreinverse,ddrm,dps,ddnm}.

These methods decompose the conditional score using Bayes' rule as
\begin{equation}
    \nabla_x\log{p_t(x|y)}=\nabla_x\log{p_t(x)}+\nabla_x\log{p_t(y|x)},
\end{equation}
where $\nabla_x\log{p_t(x)}$ is the unconditional score, which is provided by the pretrained diffusion model, and $\nabla_x\log{p_t(y|x)}$ is the likelihood score, which is typically approximated to promote consistency with $y$ while maintaining computational tractability.
These works have shown strong empirical performance on various imaging inverse problems such as inpainting, denoising, deblurring, super-resolution, and tomography; in all of these tasks the condition is given in the form of measurements $y=\mathcal{A}(x)+\eta$ for linear or non-linear observation model $\mathcal{A}$ and noise $\eta$.

\clearpage
\section{Toy Experiment 1}
\label{sec:toy_experiment_1}

Here we provide further details and results for the toy experiment conducted in \Cref{sec:motivation}. For each level of inlier complexity, we train a dedicated diffusion model on a set of inlier images that always contain a circle at the upper-right corner. The other 3 corners of the inlier images contain a shape randomly selected from the set of candidate shapes, with probabilities defined in \Cref{tab:toy_inlier_spec}. As the level increases, more shapes are equally probable at these three corners, thereby isolating the effect of increasing local multimodality in the inlier distribution. Some example images in each training set are shown in \Cref{fig:fpr_at_tpr_samples}.

\begin{table}[h]
\centering
\caption{\textbf{Toy Experiment 1 Inlier Distribution Specification.}}
\vspace{1mm}
\label{tab:toy_inlier_spec}
\begin{tabular}{lcc}
\toprule
Level & Set of Candidate Shapes & Probability  \\
\midrule
Level 1 & $\{\text{Circle, Square}\}$ & $[0.5,0.5]$ \\
Level 2 & $\{\text{Circle, Square, Triangle}\}$ & $[0.\overline{3},0.\overline{3},0.\overline{3}]$ \\
Level 3 & $\{\text{Circle, Square, Triangle, Ellipse}\}$ & $[0.25,0.25,0.25,0.25]$ \\
Level 4 & $\{\text{Circle, Square, Triangle, Ellipse, Star}\}$ & $[0.2,0.2,0.2,0.2,0.2]$ \\
\bottomrule
\end{tabular}
\end{table}

\begin{figure}[h]
  \centering
  \vspace{-5mm}
  \includegraphics[width=\linewidth]{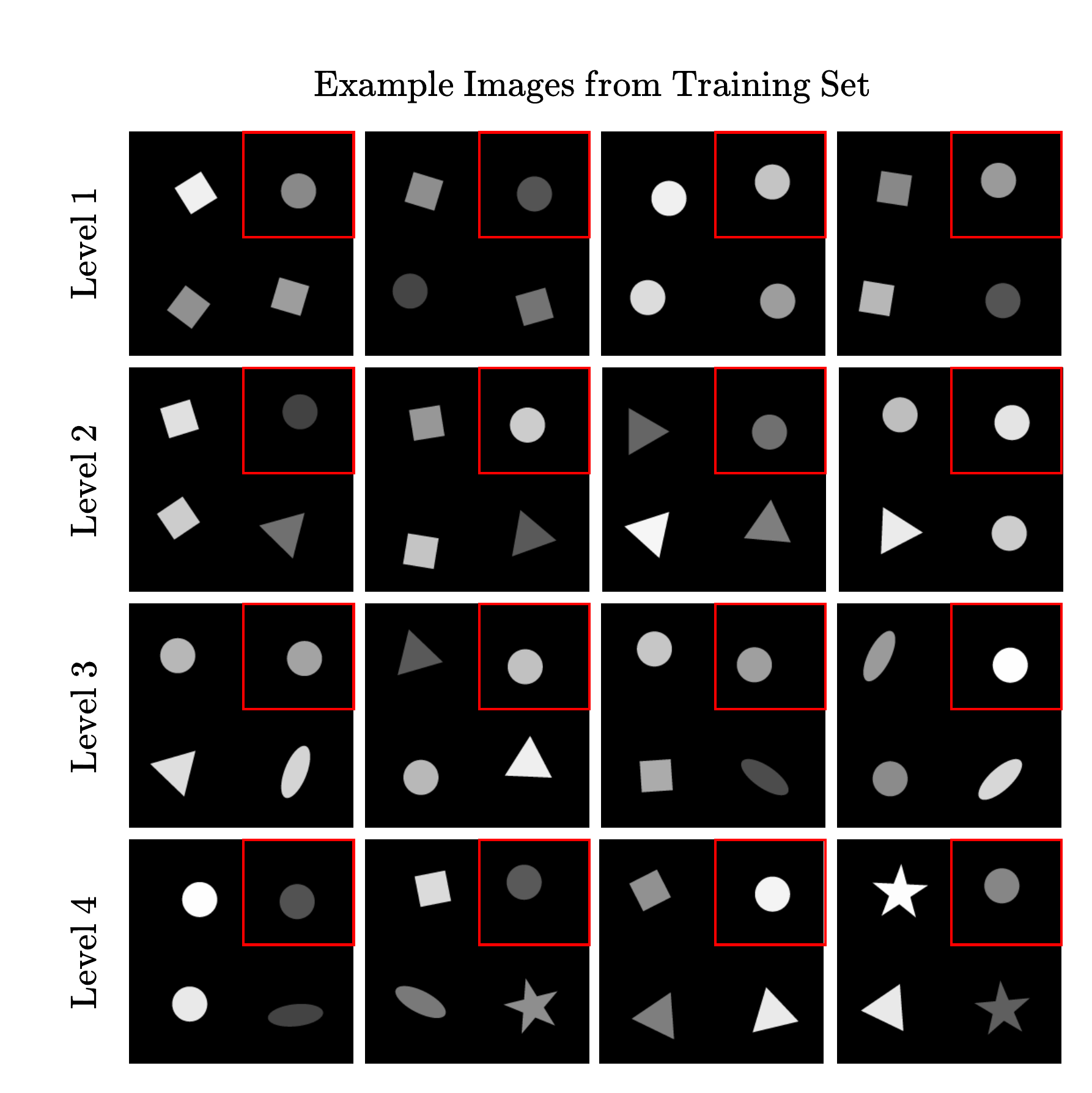}
  \vspace{-8mm}
  \caption{\textbf{Example Images from Training Sets.} The shape orientations are random, with positions sampled with jitter around fixed anchors in each quadrant. The upper-right corner, marked with a red box, always contains a circle. With increasing level, each of the other three corners has increasing modality, as more shapes are equally probable under the inlier distribution.}
  \label{fig:fpr_at_tpr_samples}
\end{figure}

\begin{figure}[t]
  \centering
  \vspace{-5mm}
  \includegraphics[width=\linewidth]{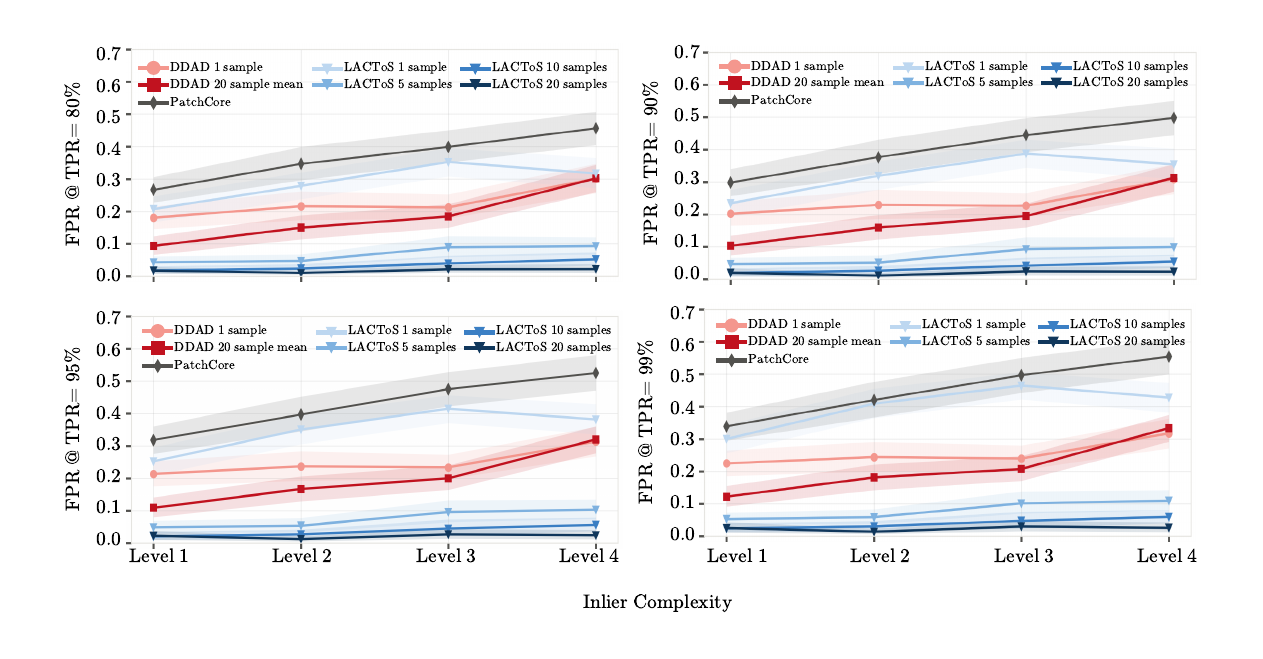}
  \vspace{-10mm}
  \caption{\textbf{\name{} Toy Experiment 1: Robustness Under Multimodality.} Average per-image False Positive Rates at True Positive Rate $\in\{ 80\%,90\%,95\%,99\% \}$, evaluated on images that contain squares on the upper-right corner.}
  \label{fig:fpr_at_tpr}
\end{figure}

Using these diffusion models, we evaluate how the multimodality of the inlier distribution affects the performance of different local anomaly detection methods.
We generate 50 test images that contain a square on the upper-right corner, and place other inlier shapes at the other corners.
We label the pixels in the square as positive (anomalous), and all others as negative (non-anomalous).
For DDAD \cite{ddad}, each test image is noised then denoised, which is used in conjunction with ResNet embeddings of the image to construct the anomaly heatmap. We also evaluate a variant of DDAD that uses more samples by repeating the noising-then-denoising process 20 times with different random seeds, averaging the denoised samples, and using the resulting average to construct the anomaly heatmap.
For PatchCore \cite{patchcore}, a feed-forward model is trained for each inlier distribution, and a heatmap is generated by evaluating the test image with the feed-forward model.
Finally, for \name{}, we follow the process described in \Cref{sec:methods}, while varying the sample count from 1 to 20.
Since the images are grayscale, we use the same set of parameters as for CHAOS, the dataset containing abdominal CT scans which are also in grayscale, for all methods.

To effectively isolate how likely different methods are to erroneously predict inlier shapes as anomalous, we report per-image averaged false- positive rate (FPR) at fixed values of per-image true-positive rate (TPR) $\in\{80\%, 90\%,95\%,99\%\}$.
\Cref{fig:fpr_at_tpr} shows that the FPR of DDAD \citep{ddad} and PatchCore \citep{patchcore} increases with inlier multimodality, whereas the FPR of \name{} decreases rapidly with sample size, and stays negligible compared to the baseline methods when 20 samples are used, even when the inlier distribution is increasingly multimodal.
This demonstrates that \name{} is more robust to inlier multimodality.

\clearpage
\section{Toy Experiment 2}
\label{sec:toy_experiment_2}

We conduct a second toy experiment to better understand how \name{} and baseline methods treat ID but under-represented features, compared to dominant ID features and strictly OOD features.
We hypothesize that, due to the distribution-aware nature of statistical typicality, \name{} considers these minority ID features to be closer to majority ID features, whereas baseline methods that use a first-order statistic may consider them to be closer to OOD.
To demonstrate this concretely, we train 5 diffusion models on training distributions whose specifications are provided in \Cref{tab:toy_2_inlier_spec}.

\begin{table}[h]
\centering
\caption{\textbf{Toy Experiment 2 Inlier Distribution Specification.}}
\vspace{1mm}
\label{tab:toy_2_inlier_spec}
\begin{tabular}{lcc}
\toprule
Level & Set of Candidate Shapes & Probability  \\
\midrule
Level 1 & $\{\text{Circle, Square}\}$ & $\{0.5,0.5\}$ \\
Level 2 & $\{\text{Circle, Square}\}$ & $\{0.6,0.4\}$ \\
Level 3 & $\{\text{Circle, Square}\}$ & $\{0.7,0.3\}$ \\
Level 4 & $\{\text{Circle, Square}\}$ & $\{0.8,0.2\}$ \\
Level 5 & $\{\text{Circle, Square}\}$ & $\{0.9,0.1\}$ \\
\bottomrule
\end{tabular}
\end{table}

Similar to the toy experiment in \Cref{sec:toy_experiment_1}, the upper-right corner still only admits a circle under the inlier distribution, so a square there is a strict OOD feature.
On the other hand, the inlier probability of a square being placed in the other 3 corners is progressively lowered from $0.5$ to $0.1$.
Example images in each training set are shown in \Cref{fig:auroc_samples}.

\begin{figure}[t]
  \centering
  \vspace{-10mm}
  \includegraphics[width=\linewidth]{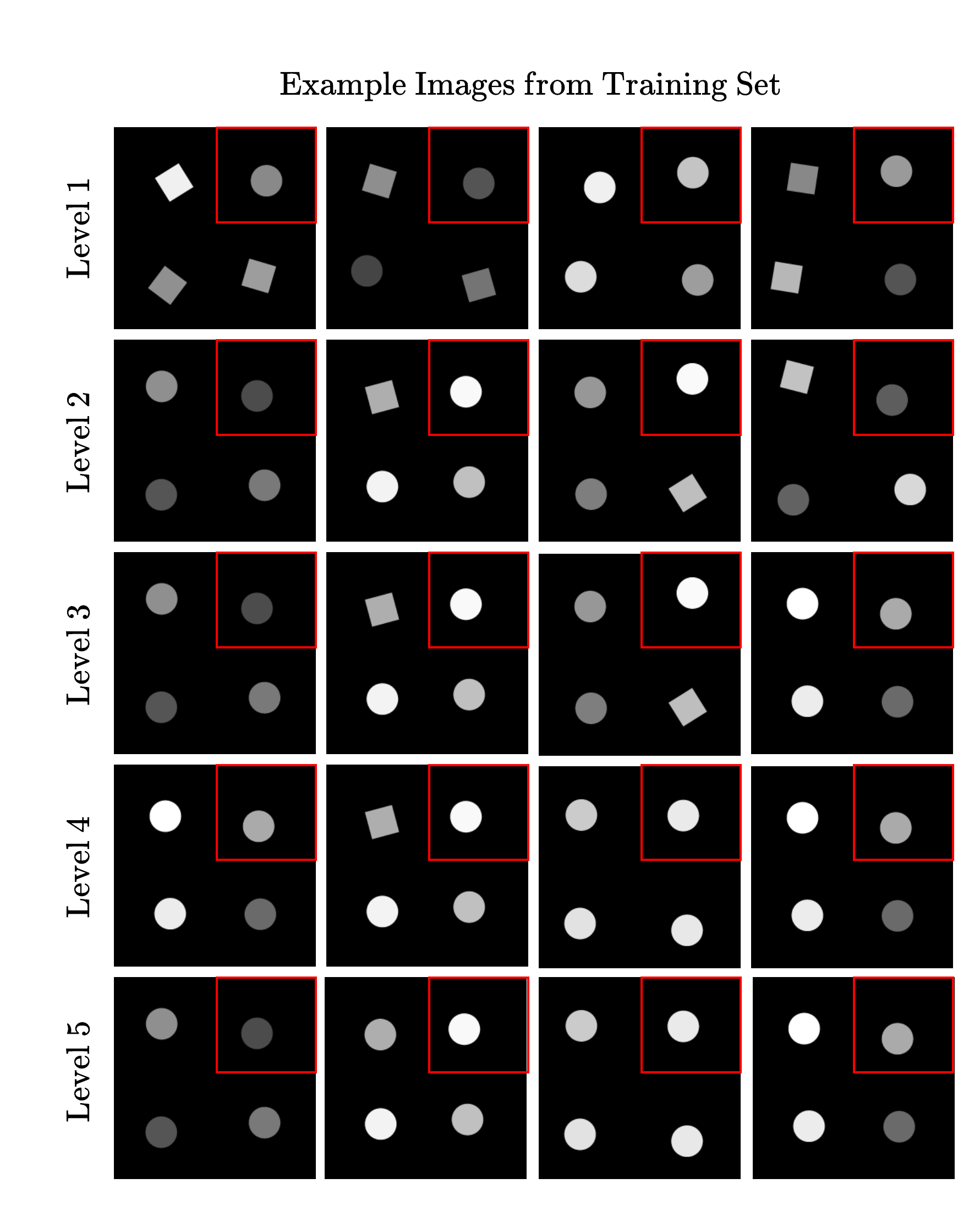}
  \vspace{-8mm}
  \caption{\textbf{Example Images from Training Sets.} Positions of the shapes are sampled with jitter around fixed anchors, with random orientations. The upper-right corner, marked with a red box, always contains a circle. With increasing level, the probability of a square at the other three corners decreases, making it more difficult to distinguish squares there from the square in the upper-right corner.}
  \label{fig:auroc_samples}
\end{figure}

To verify the hypothesis, we measure two quantities.
First, we measure the AUROC over pixels that belong to ID squares and circles, where squares are labeled positive and circles are labeled negative.
A higher AUROC here means the method treats ID, but potentially under-represented, squares differently from ID circles, and vice versa.
Second, we measure the AUROC over pixels that belong to squares, where OOD squares (those at the upper-right corner) are labeled positive and ID squares (those at other corners) are labeled negative.
Higher AUROC here means that the method treats the under-represented ID squares differently from truly OOD squares, and vice versa.
In \Cref{fig:auroc_squares}, we observe that \name{} is less likely to distinguish ID squares from ID circles and more likely to distinguish under-represented ID squares from truly OOD squares compared to DDAD.
Both trends become more apparent with increasing number of sampled reconstructions.
Together, these results verify our hypothesis, and inform practitioners that \name{} is a more reliable choice when one wishes to detect features that are not included in the training set without falsely flagging uncommon inlier features.

\begin{figure}[t]
  \centering
  \vspace{-10mm}
  \includegraphics[width=\linewidth]{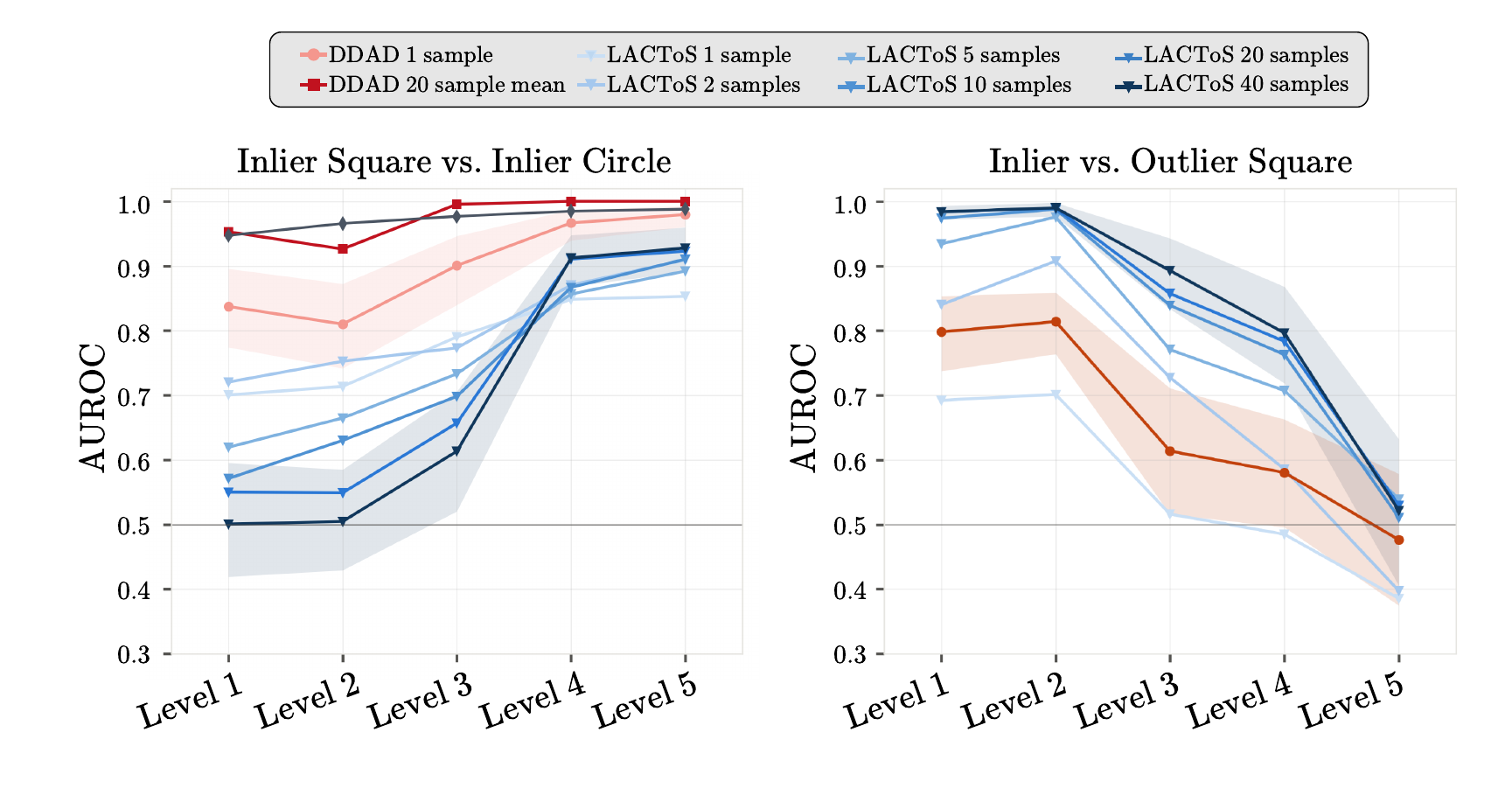}
  \vspace{-8mm}
  \caption{\textbf{\name{} Toy Experiment 2.} AUROCs across different inlier square probabilities, over methods and different sample sizes. (Left) AUROC is computed over pixels that belong to inlier squares and circles, with those pixels from squares labeled positive, and negative otherwise. Higher AUROC means the method is more likely to distinguish an inlier square (minority) from inlier circles (majority). (Right) AUROC is computed over pixels that belong to squares, with those pixels from the square in the upper-right corner (truly OOD) labeled positive, and negative otherwise (minority inlier). Higher AUROC means the method better distinguishes an inlier but uncommon square from a strictly out-of-distribution square.}
  \label{fig:auroc_squares}
\end{figure}

\clearpage
\section{Theory}
\label{app:theory}

Here we provide a formal version of \Cref{thm:main}. Fix a test image \(\xstar\) and a superpixel \(\spxk\in\spxs\).
Let \(\mathcal V:=\Zpixel^C\), where \(\Zpixel=\{0,\ldots,255\}\) denotes the set of possible 8-bit values for one color channel. Thus, \(\mathcal V\) is the single-pixel state space, under the finite quantized histogram setting.
We consider features as an empirical extension, and do not include them in the discussion here.
Under Assumption~\ref{assumption:conditional_independence}, the ideal inpainting posterior \(\pinpk=p(\x\mid\xstarspxnotk)\) induces the single-pixel distribution
\begin{equation}
    \qk
    :=
    p(\xj\mid \xstarspxnotk),
    \qquad
    \qk:\mathcal V\rightarrow[0,1],
    \quad j\in\spxk.
    \label{eq:inp_qk}
    \end{equation}
For \(\eta\sim\mathcal{N}(0,\sigma^2 I)\), write \(y:=\xstar+\eta\).
For a realization \(y^m\) of \(y\), let \(y^m:=\xstar+\eta^m\) denote the noised version of \(\xstar\) used to sample \(\x^m\), and define the pooled single-pixel denoising marginal for superpixel \(\spxk\) as
\begin{equation}
    q_{k,y^m}^{\mathrm{den}}
    :=
    \frac{1}{|\spxk|}
    \sum_{j\in\spxk}
    p(\xj \mid y^m),
    \qquad
    q_{k,y^m}^{\mathrm{den}}:\mathcal V\rightarrow[0,1].
    \label{eq:denoising_qk_y}
\end{equation}
The denoising surrogate distribution used by the score is then
\begin{equation}
    q_k^{\mathrm{den},\sigma}
    :=
    \E_{\eta}\left[q_{k,y}^{\mathrm{den}}\right].
    \label{eq:denoising_qk}
\end{equation}
The corresponding ideal and denoising scores are
\begin{align}
    g^{\mathrm{inp}}(\xstar,k)
    &:=
    \left|
        -\frac{1}{|\spxk|}\sum_{j\in\spxk}\log \qk(\xstarj)
        -|\spxk|\entropy(\qk)
    \right|,
    \label{eq:inp_score}\\
    g^{\mathrm{den}}_{\sigma}(\xstar,k)
    &:=
    \left|
        -\frac{1}{|\spxk|}\sum_{j\in\spxk}\log q_k^{\mathrm{den},\sigma}(\xstarj)
        -|\spxk|\entropy(q_k^{\mathrm{den},\sigma})
    \right|.
    \label{eq:den_score}
\end{align}

For a realization \(y^m\) of \(y\), define the auxiliary distribution
\begin{equation}
    q_{k,y^m}^{\mathrm{aux}}
    :=
    \frac{1}{|\spxk|}
    \sum_{j\in\spxk}
    p(\xj\mid \xstarspxnotk, y^m_{\spxk}),
    \qquad
    q_{k,y^m}^{\mathrm{aux}}:\mathcal V\rightarrow[0,1].
    \label{eq:aux_qk}
\end{equation}
This is the ideal inpainting conditional after additionally revealing the noisy superpixel values, pooled over pixels in \(\spxk\).  It is used only to separate the two sources of error: loss of context from observing the noised version of \(\xstarspxnotk\) instead of \(\xstarspxnotk\), and the leakage of information about \(\xstarspxk\) from observing a noisy instead of completely masked superpixel. We will refer to the former as \emph{context gap}, and the latter as \emph{self-leakage}.

The following assumption formalizes the bias--variance-like tradeoff induced by using denoising instead of inpainting.
A denoising conditional observes a noisy version of the entire image, whereas the ideal inpainting conditional observes the clean complement \(\xstarspxnotk\) and does not observe \(\xstarspxk\).
As \(\sigma\) increases, the noisy observation outside \(\spxk\) becomes a worse proxy for the clean outside context and widens that context gap, which we capture as a linear function of \(\sigma\) in the form of \(\alpha_k\sigma\).
In the opposite direction, when \(\sigma\) is too small, the noisy values inside \(\spxk\) reveal too much about \(\xstarspxk\), allowing the denoising posterior to copy the held-out region rather than predict it from context; this self-leakage is represented by the \(\beta_k\sigma^{-\gamma_k}\) term.

\begin{assumption}
\label{assumption:tradeoff}
[Denoising--inpainting tradeoff]
There exist constants \(\alpha_k,\beta_k,\gamma_k>0\) and \(0<\underline{\sigma}_k\leq\overline{\sigma}_k\) such that, for every \(\sigma\in[\underline{\sigma}_k,\overline{\sigma}_k]\),
\begin{align}
    \E_{\eta}
    \left[
        \left\|
        q_{k,y}^{\mathrm{den}}
        -
        q_{k,y}^{\mathrm{aux}}
        \right\|_1
    \right]
    &\leq \alpha_k\sigma,
    \label{eq:context_gap}\\
    \E_{\eta}
    \left[
        \left\|
        q_{k,y}^{\mathrm{aux}}
        -
        \qk
        \right\|_1
    \right]
    &\leq \beta_k\sigma^{-\gamma_k}.
    \label{eq:self_leakage}
\end{align}
\end{assumption}

We add another assumption, which is a mild regularity condition ensuring that small distributional errors translate into small score errors. 
Since the score contains both a log-likelihood term and an entropy term, probabilities near zero can make the functional unstable.
The lower bound by \(\delta_k\) rules out this degenerate case on the finite single-pixel state space.
In practice, the computed histogram-based scores can enforce such a floor by smoothing or clipping the estimated probabilities.

\begin{assumption}[Score regularity]
\label{assumption:histfloor}
There exists \(\delta_k\in(0,1)\) such that, for every \(\sigma\in[\underline{\sigma}_k,\overline{\sigma}_k]\),
\begin{equation}
    \qk(v)\geq\delta_k
    \quad\text{and}\quad
    q_k^{\mathrm{den},\sigma}(v)\geq\delta_k
    \qquad
    \text{for all }v\in\mathcal V.
    \label{eq:hist_floor}
\end{equation}
\end{assumption}

We now state the formal version of \Cref{thm:main}.

\begin{theorem}[Formal denoising score approximation]
\label{thm:main_formal}
Under Assumptions~\ref{assumption:tradeoff} and~\ref{assumption:histfloor}, for every \(\sigma\in[\underline{\sigma}_k,\overline{\sigma}_k]\),
\begin{equation}
    \left|
        g^{\mathrm{inp}}(\xstar,k)
        -
        g^{\mathrm{den}}_{\sigma}(\xstar,k)
    \right|
    \leq
    L_{\delta_k}\left(\alpha_k\sigma+\beta_k\sigma^{-\gamma_k}\right),
    \label{eq:main_bound}
\end{equation}
where
\begin{equation}
    L_{\delta_k}:=\left(\delta_k^{-1}+1+\log(\delta_k^{-1})\right).
\end{equation}
Equivalently, after absorbing \(L_{\delta_k}\) into the constants, there exist positive constants \(\widetilde \alpha_k\) and \(\widetilde \beta_k\) such that
\begin{equation}
    \left|
        g^{\mathrm{inp}}(\xstar,k)
        -
        g^{\mathrm{den}}_{\sigma}(\xstar,k)
    \right|
    \leq
    \widetilde \alpha_k\sigma+
    \widetilde \beta_k\sigma^{-\gamma_k}.
    \label{eq:main_bound_absorbed}
\end{equation}
\end{theorem}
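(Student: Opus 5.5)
The plan is to reduce the score difference to an $\ell_1$ distance between the two single-pixel distributions $\qk$ and $q_k^{\mathrm{den},\sigma}$, and then bound that distance with Assumption~\ref{assumption:tradeoff}. Write $\Phi(q):=-\frac{1}{|\spxk|}\sum_{j\in\spxk}\log q(\xstarj)-|\spxk|\entropy(q)$, so that $g^{\mathrm{inp}}=|\Phi(\qk)|$ and $g^{\mathrm{den}}_\sigma=|\Phi(q_k^{\mathrm{den},\sigma})|$. By the reverse triangle inequality, $\bigl||\Phi(q)|-|\Phi(q')|\bigr|\leq|\Phi(q)-\Phi(q')|$. It therefore suffices to show that $\Phi$ is Lipschitz in $\|\cdot\|_1$ on the set $\{q:\ q(v)\geq\delta_k\ \forall v\in\mathcal V\}$. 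Assumption~\ref{assumption:histfloor} places both $\qk$ and $q_k^{\mathrm{den},\sigma}$ in this set for every $\sigma\in[\underline{\sigma}_k,\overline{\sigma}_k]$.

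\textbf{Lipschitz bound for $\Phi$.} I treat the two terms of $\Phi$ separately. For the log-likelihood term, the mean value theorem on $[\delta_k,1]$ gives $|\log q(v)-\log q'(v)|\leq\delta_k^{-1}|q(v)-q'(v)|\leq\delta_k^{-1}\|q-q'\|_1$ for each $v$. Averaging over $j\in\spxk$ keeps the bound $\delta_k^{-1}\|q-q'\|_1$.

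For the entropy term, write $\entropy(q)=-\sum_v h(q(v))$ with $h(t)=t\log t$. Then $|h'(t)|=|1+\log t|\leq 1+\log(\delta_k^{-1})$ on $[\delta_k,1]$, so $|\entropy(q)-\entropy(q')|\leq(1+\log\delta_k^{-1})\|q-q'\|_1$. The floor is used pointwise here: for each $v$, both $q(v)$ and $q'(v)$ lie in $[\delta_k,1]$, so the segment between them does too.

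Adding the two terms gives $|\Phi(q)-\Phi(q')|\leq L_{\delta_k}\|q-q'\|_1$. One bookkeeping point needs care: the factor $|\spxk|$ multiplying the entropy in \eqref{eq:inp_score}--\eqref{eq:den_score}. Taken literally, it multiplies the entropy contribution to the constant by $|\spxk|$. The stated $L_{\delta_k}$ matches the per-pixel normalization of \eqref{eq:typicality_superpixel}. I would either adopt that normalization or carry the factor $|\spxk|$ and absorb it into $\widetilde\alpha_k,\widetilde\beta_k$ in \eqref{eq:main_bound_absorbed}. Since $\spxk$ is fixed, this changes only constants.

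\textbf{Distributional bound.} By definition \eqref{eq:denoising_qk}, $q_k^{\mathrm{den},\sigma}-\qk=\E_\eta[q_{k,y}^{\mathrm{den}}-\qk]$, because $\qk$ does not depend on $\eta$. Convexity of the norm (Jensen) gives $\|q_k^{\mathrm{den},\sigma}-\qk\|_1\leq\E_\eta\|q_{k,y}^{\mathrm{den}}-\qk\|_1$. Inserting the auxiliary distribution $q_{k,y}^{\mathrm{aux}}$ and applying the triangle inequality inside the expectation splits this into the context gap \eqref{eq:context_gap} and the self-leakage \eqref{eq:self_leakage}. The result is $\|q_k^{\mathrm{den},\sigma}-\qk\|_1\leq\alpha_k\sigma+\beta_k\sigma^{-\gamma_k}$. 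Combining this with the Lipschitz bound yields \eqref{eq:main_bound}, and \eqref{eq:main_bound_absorbed} follows by setting $\widetilde\alpha_k=L_{\delta_k}\alpha_k$ and $\widetilde\beta_k=L_{\delta_k}\beta_k$.

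The main obstacle is the Lipschitz step, not the distributional one. The Jensen-plus-triangle argument is routine once Assumption~\ref{assumption:tradeoff} is granted. The delicate part is that the floor must hold for the averaged surrogate $q_k^{\mathrm{den},\sigma}$ and not merely for each $q_{k,y}^{\mathrm{den}}$. That is why the Lipschitz argument is applied only to the $\eta$-averaged distribution, and why I never need a floor on $q_{k,y}^{\mathrm{den}}$ or $q_{k,y}^{\mathrm{aux}}$ individually. The other delicate part is tracking the $|\spxk|$ normalization so that the constant stays exactly $L_{\delta_k}$.
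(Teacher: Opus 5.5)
Your proposal is correct and follows the paper's proof essentially step for step: the reverse triangle inequality, the $\delta_k^{-1}$ and $1+\log(\delta_k^{-1})$ Lipschitz bounds for the log-likelihood and entropy terms combining into $L_{\delta_k}$, then Jensen, insertion of $q_{k,y}^{\mathrm{aux}}$, and Assumption~\ref{assumption:tradeoff}. Your bookkeeping remark is also right: the paper's proof silently uses the per-pixel normalization of \eqref{eq:typicality_superpixel}, with no factor $|\spxk|$ on the entropy, whereas the literal definitions \eqref{eq:inp_score}--\eqref{eq:den_score} make the argument give the constant $\delta_k^{-1}+|\spxk|\bigl(1+\log(\delta_k^{-1})\bigr)$ and hence only the absorbed form \eqref{eq:main_bound_absorbed}.
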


\begin{proof}
First, we begin by inspecting the error of the anomaly scores.
For any two distributions \(q,r:\mathcal V\rightarrow[0,1]\) satisfying \(q(v),r(v)\geq\delta_k\) for every \(v\in\mathcal V\),
\begin{align}
\label{eq:triangle}
    &\left|
    \left|
        -\frac{1}{|\spxk|}\sum_{j\in\spxk}\log q(\xstarj)-\entropy(q)
    \right|
    -
    \left|
        -\frac{1}{|\spxk|}\sum_{j\in\spxk}\log r(\xstarj)-\entropy(r)
    \right|
    \right| 
    \nonumber\\
    &\qquad\leq
    \left|
        \frac{1}{|\spxk|}\sum_{j\in\spxk}\big(\log r(\xstarj)-\log q(\xstarj)\big)
        +\big(\entropy(r)-\entropy(q)\big)
    \right|
    \nonumber \\
    &\qquad\leq
    \frac{1}{|\spxk|}\sum_{j\in\spxk}
    \left|\log r(\xstarj)-\log q(\xstarj)\right|
    +
    \left|\entropy(q)-\entropy(r)\right|,
\end{align}
by applying reverse triangle inequality, and then triangle inequality.
Since \(t\mapsto -\log t\) has derivative bounded in absolute value by \(\delta_k^{-1}\) on \([\delta_k,1]\),
\begin{equation}
\label{eq:log_lipschitz}
    \frac{1}{|\spxk|}\sum_{j\in\spxk}
    \left|\log r(\xstarj)-\log q(\xstarj)\right|
    \leq
    \delta_k^{-1}\|q-r\|_1.
\end{equation}
Similarly, for \(h(t)=-t\log{t}\), the derivative \(h'(t)=-1-\log{t}\) satisfies
\begin{equation}
    |h'(t)|\leq 1+\log(\delta_k^{-1})
    \qquad
    \text{for }t\in[\delta_k,1].
\end{equation}
Applying the mean-value theorem coordinate-wise gives
\begin{equation}
    \left|\entropy(q)-\entropy(r)\right|
    \leq
    \left(1+\log(\delta_k^{-1})\right)\|q-r\|_1.
    \label{eq:entropy_lipschitz}
\end{equation}
Combining \eqref{eq:triangle}--\eqref{eq:entropy_lipschitz},
\begin{equation}
    \Big|
    \left|
        -\frac{1}{|\spxk|}\sum_{j\in\spxk}\log q(\xstarj)-\entropy(q)
    \right|
    -
    \left|
        -\frac{1}{|\spxk|}\sum_{j\in\spxk}\log r(\xstarj)-\entropy(r)
    \right|
    \Big|
    \leq
    L_{\delta_k}\|q-r\|_1.
    \label{eq:score_lipschitz}
\end{equation}

It remains to control \(\|q_k^{\mathrm{den},\sigma}-\qk\|_1\).
By the definition of \(q_k^{\mathrm{den},\sigma}\) and convexity of the \(\ell_1\) norm,
\begin{align}
    \left\|q_k^{\mathrm{den},\sigma}-\qk\right\|_1
    &=
    \left\|
    \E_{\eta}
    \left[
        q_{k,y}^{\mathrm{den}}
        -
        \qk
    \right]
    \right\|_1
    \nonumber\\
    &\leq
    \E_{\eta}
    \left[
        \left\|
        q_{k,y}^{\mathrm{den}}
        -
        \qk
        \right\|_1
    \right].
    \label{eq:jensen}
\end{align}
For each realization \(y^m\) of \(y\), insert \(q_{k,y^m}^{\mathrm{aux}}\) and apply the triangle inequality:
\begin{align}
    \left\|
        q_{k,y^m}^{\mathrm{den}}
        -
        \qk
    \right\|_1
    &\leq
    \left\|
        q_{k,y^m}^{\mathrm{den}}
        -
        q_{k,y^m}^{\mathrm{aux}}
    \right\|_1
    +
    \left\|
        q_{k,y^m}^{\mathrm{aux}}
        -
        \qk
    \right\|_1.
    \label{eq:error_decomposition}
\end{align}
Taking expectations and using Assumption~\ref{assumption:tradeoff} yields
\begin{equation}
    \left\|q_k^{\mathrm{den},\sigma}-\qk\right\|_1
    \leq
    \alpha_k\sigma+\beta_k\sigma^{-\gamma_k}.
    \label{eq:distribution_bound}
\end{equation}
Finally, apply \eqref{eq:score_lipschitz} with \(q=q_k^{\mathrm{den},\sigma}\) and \(r=\qk\).  
This proves \eqref{eq:main_bound}.  
The absorbed-constant form \eqref{eq:main_bound_absorbed} follows by setting \(\widetilde \alpha_k=L_{\delta_k}\alpha_k\) and \(\widetilde \beta_k=L_{\delta_k}\beta_k\).
\end{proof}

\begin{corollary}[Bound-optimal denoising noise]
\label{corollary:optimal_sigma}
Under the conditions of \Cref{thm:main_formal}, the upper bound in \eqref{eq:main_bound} is minimized over \(\sigma\in[\underline{\sigma}_k,\overline{\sigma}_k]\) by
\begin{equation}
    \sigma_k^{\mathrm{opt}}
    :=
    \min\!\left\{
        \overline{\sigma}_k,
        \max\!\left\{
            \underline{\sigma}_k,
            \left(\frac{\gamma_k \beta_k}{\alpha_k}\right)^{1/(\gamma_k+1)}
        \right\}
    \right\}.
    \label{eq:optimal_sigma}
\end{equation}
In particular, if
\begin{equation}
    \left(\frac{\gamma_k \beta_k}{\alpha_k}\right)^{1/(\gamma_k+1)}
    \in [\underline{\sigma}_k,\overline{\sigma}_k],
\end{equation}
then the bound is minimized at the interior value
\begin{equation}
    \sigma_k^{\mathrm{opt}}
    =
    \left(\frac{\gamma_k \beta_k}{\alpha_k}\right)^{1/(\gamma_k+1)},
    \label{eq:optimal_sigma_interior}
\end{equation}
and the minimum value of the right-hand side of \eqref{eq:main_bound} is
\begin{equation}
    L_{\delta_k}
    (\gamma_k+1)\gamma_k^{-\gamma_k/(\gamma_k+1)}
    \alpha_k^{\gamma_k/(\gamma_k+1)}
    \beta_k^{1/(\gamma_k+1)}.
    \label{eq:optimal_bound_value}
\end{equation}
\end{corollary}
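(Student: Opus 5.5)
The corollary is a one-dimensional calculus problem. Theorem~\ref{thm:main_formal} gives the bound $B(\sigma):=L_{\delta_k}\left(\alpha_k\sigma+\beta_k\sigma^{-\gamma_k}\right)$ on $[\underline{\sigma}_k,\overline{\sigma}_k]$. Since $L_{\delta_k}>0$ does not depend on $\sigma$, I minimize $h(\sigma):=\alpha_k\sigma+\beta_k\sigma^{-\gamma_k}$ over $(0,\infty)$ and then restrict to the interval.

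\textbf{Step 1: global minimizer.} Differentiate:
\begin{equation*}
h'(\sigma)=\alpha_k-\gamma_k\beta_k\sigma^{-\gamma_k-1},
\qquad
h''(\sigma)=\gamma_k(\gamma_k+1)\beta_k\sigma^{-\gamma_k-2}>0 .
\end{equation*}
So $h$ is strictly convex on $(0,\infty)$. The unique stationary point is $\sigma^\ast=(\gamma_k\beta_k/\alpha_k)^{1/(\gamma_k+1)}$. Because $h'$ is increasing, $h'<0$ for $\sigma<\sigma^\ast$ and $h'>0$ for $\sigma>\sigma^\ast$.

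\textbf{Step 2: restrict to the interval.} The sign pattern in Step 1 means $h$ is decreasing left of $\sigma^\ast$ and increasing right of it. Hence the minimizer over $[\underline{\sigma}_k,\overline{\sigma}_k]$ is the projection of $\sigma^\ast$ onto the interval:
\begin{itemize}
\item $\underline{\sigma}_k$ if $\sigma^\ast<\underline{\sigma}_k$;
\item $\overline{\sigma}_k$ if $\sigma^\ast>\overline{\sigma}_k$;
\item $\sigma^\ast$ otherwise.
\end{itemize}
This projection is exactly the $\min\{\overline{\sigma}_k,\max\{\underline{\sigma}_k,\sigma^\ast\}\}$ formula in \eqref{eq:optimal_sigma}. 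It is well defined because $\underline{\sigma}_k\le\overline{\sigma}_k$. The interior case \eqref{eq:optimal_sigma_interior} follows immediately.

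\textbf{Step 3: minimum value.} This is the only computational step, and it is the one to double-check. At $\sigma^\ast$ the stationarity condition gives $\beta_k(\sigma^\ast)^{-\gamma_k}=\alpha_k\sigma^\ast/\gamma_k$, so
\begin{equation*}
h(\sigma^\ast)=\alpha_k\sigma^\ast\left(1+\gamma_k^{-1}\right).
\end{equation*}
Substituting $\sigma^\ast=\gamma_k^{1/(\gamma_k+1)}\beta_k^{1/(\gamma_k+1)}\alpha_k^{-1/(\gamma_k+1)}$:
\begin{itemize}
\item the powers of $\alpha_k$ combine to $\alpha_k^{1-1/(\gamma_k+1)}=\alpha_k^{\gamma_k/(\gamma_k+1)}$;
\item the power of $\beta_k$ is $\beta_k^{1/(\gamma_k+1)}$;
\item the $\gamma_k$ factors give $\frac{\gamma_k+1}{\gamma_k}\,\gamma_k^{1/(\gamma_k+1)}=(\gamma_k+1)\,\gamma_k^{-\gamma_k/(\gamma_k+1)}$.
\end{itemize}
Multiplying by $L_{\delta_k}$ yields \eqref{eq:optimal_bound_value}. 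The only real care needed is the exponent bookkeeping for $\gamma_k$; there is no conceptual difficulty.
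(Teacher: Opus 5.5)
Your proposal is correct and follows the paper's proof essentially step for step. Both proofs drop the positive factor $L_{\delta_k}$, locate the unique critical point of $\alpha_k\sigma+\beta_k\sigma^{-\gamma_k}$ via the first and second derivatives, project it onto $[\underline{\sigma}_k,\overline{\sigma}_k]$, and substitute to obtain the minimum value. Your sign-of-$h'$ justification for the clipping step and your use of the stationarity relation to simplify $h(\sigma^\ast)$ are slightly more explicit than the paper's, but the route is the same.
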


\begin{proof}
The multiplicative factor \(L_{\delta_k}\) in \eqref{eq:main_bound} is positive and therefore does not affect the choice of \(\sigma\).
It is enough to minimize
\begin{equation}
    \alpha_k\sigma+\beta_k\sigma^{-\gamma_k}.
\end{equation}
The derivative is
\begin{equation}
    \alpha_k-\gamma_k \beta_k\sigma^{-\gamma_k-1},
\end{equation}
which vanishes exactly when
\begin{equation}
    \sigma
    =
    \left(\frac{\gamma_k \beta_k}{\alpha_k}\right)^{1/(\gamma_k+1)}.
\end{equation}
The second derivative is
\begin{equation}
    \gamma_k(\gamma_k+1)\beta_k\sigma^{-\gamma_k-2}>0,
\end{equation}
so this critical point is the unique unconstrained minimizer.
Restricting \(\sigma\) to \([\underline{\sigma}_k,\overline{\sigma}_k]\) gives \eqref{eq:optimal_sigma}.
When the unconstrained minimizer lies inside the interval, substituting \eqref{eq:optimal_sigma_interior} into \(L_{\delta_k}(\alpha_k\sigma+\beta_k\sigma^{-\gamma_k})\) gives \eqref{eq:optimal_bound_value}.
\end{proof}

This corollary should be interpreted as selecting the \emph{bound-optimal} noise level rather than the necessarily optimal empirical denoising noise.
The expression balances the two errors in Assumption~\ref{assumption:tradeoff}: increasing \(\sigma\) worsens the context gap, while decreasing \(\sigma\) worsens self-leakage.
At an interior optimum, these two effects satisfy \(\alpha_k\sigma_k^{\mathrm{opt}}=\gamma_k \beta_k(\sigma_k^{\mathrm{opt}})^{-\gamma_k}\).

We now analyze the typicality score without Assumption~\ref{assumption:conditional_independence}. The signed statistic remains centered under the ID model, while dependence affects its concentration through the covariance of the per-pixel information values.

Fix $\spxk$ and $\xstarspxnotk$, and write $n_k:=|\spxk|$. Since the single-pixel marginals may now differ, replace $\qk$ by their pooled marginal
\begin{equation}
    \bar q^{\mathrm{inp}}_k(v)
    :=\frac{1}{n_k}\sum_{j\in\spxk}
      p(\xj=v\mid\xstarspxnotk),
    \qquad v\in\mathcal V.
\label{eq:pooled_inpainting_marginal}
\end{equation}
This reduces to $\qk$ under Assumption~\ref{assumption:conditional_independence}. Retaining the probability floor of Assumption~\ref{assumption:histfloor} for $\bar q^{\mathrm{inp}}_k$, define
\begin{equation}
\begin{aligned}
    Z_{k,j}
    &:= -\log\bar q^{\mathrm{inp}}_k(\xj)
        -\entropy(\bar q^{\mathrm{inp}}_k),\\
    T^{\mathrm{inp}}_k
    &:=\frac{1}{n_k}\sum_{j\in\spxk}Z_{k,j},
    \qquad G^{\mathrm{inp}}_k:=|T^{\mathrm{inp}}_k|.
\end{aligned}
\label{eq:dependent_typicality_score}
\end{equation}
Here $G^{\mathrm{inp}}_k$ is the per-pixel counterpart of the score in \eqref{eq:inp_score}, using the pooled marginal.

By the definition of $\bar q^{\mathrm{inp}}_k$, the expected average information is $\entropy(\bar q^{\mathrm{inp}}_k)$, and hence $\E_{\pinpk}[T^{\mathrm{inp}}_k]=0$. Dependence enters through the total variance
\begin{equation}
\begin{aligned}
    \Gamma^{\mathrm{ID}}_k
    &:=\operatorname{Var}_{\pinpk}
       \left(\sum_{j\in\spxk}Z_{k,j}\right)\\
    &=\sum_{j\in\spxk}\operatorname{Var}_{\pinpk}(Z_{k,j})
      +2\!\sum_{\substack{i,j\in\spxk\\i<j}}
       \operatorname{Cov}_{\pinpk}(Z_{k,i},Z_{k,j}).
\end{aligned}
\label{eq:dependent_id_variance}
\end{equation}
Since $\operatorname{Var}_{\pinpk}(T^{\mathrm{inp}}_k)=\Gamma^{\mathrm{ID}}_k/n_k^2$, Chebyshev's inequality gives
\begin{equation}
    \mathbb P_{\pinpk}(G^{\mathrm{inp}}_k\geq\tau)
    \leq\frac{\Gamma^{\mathrm{ID}}_k}{n_k^2\tau^2},
    \qquad \tau>0.
\label{eq:dependent_id_concentration}
\end{equation}
To express this in terms of effective sample size, let
\begin{equation}
    v^2_{k,\mathrm{ID}}
    :=\frac{1}{n_k}\sum_{j\in\spxk}
      \operatorname{Var}_{\pinpk}(Z_{k,j}),
    \qquad
    n^{\mathrm{ID}}_{k,\mathrm{eff}}
    :=\frac{n_k^2v^2_{k,\mathrm{ID}}}{\Gamma^{\mathrm{ID}}_k}
    \quad(\Gamma^{\mathrm{ID}}_k>0).
\label{eq:dependent_effective_sample_size}
\end{equation}
The bound becomes $v^2_{k,\mathrm{ID}}/(n^{\mathrm{ID}}_{k,\mathrm{eff}}\tau^2)$. Under Assumption~\ref{assumption:conditional_independence}, the cross-covariances vanish and $n^{\mathrm{ID}}_{k,\mathrm{eff}}=n_k$; a positive sum of cross-covariances reduces the effective sample size and weakens concentration.

The same variance dependence appears in OOD detection. For an OOD conditional distribution $R_k$, let $\Delta_k:=|\E_{R_k}[T^{\mathrm{inp}}_k]|$ and $\Gamma^{\mathrm{OOD}}_k:=\operatorname{Var}_{R_k}(\sum_{j\in\spxk}Z_{k,j})$. If $G^{\mathrm{inp}}_k<\tau<\Delta_k$, then $|T^{\mathrm{inp}}_k-\E_{R_k}[T^{\mathrm{inp}}_k]|\geq\Delta_k-\tau$. Chebyshev's inequality therefore yields
\begin{equation}
    \mathbb P_{R_k}(G^{\mathrm{inp}}_k<\tau)
    \leq\frac{\Gamma^{\mathrm{OOD}}_k}
                 {n_k^2(\Delta_k-\tau)^2},
    \qquad 0<\tau<\Delta_k.
\label{eq:dependent_ood_detection}
\end{equation}

Finally, let $G^{\mathrm{den}}_{k,\sigma}$ be the score in \eqref{eq:den_score} divided by $n_k$, with its denoising marginal evaluated at the image being scored. If Assumptions~\ref{assumption:tradeoff} and~\ref{assumption:histfloor} hold uniformly over the considered superpixel realizations, with common constants and $\bar q^{\mathrm{inp}}_k$ replacing $\qk$, the proof of \Cref{thm:main_formal} gives
\begin{equation}
    |G^{\mathrm{inp}}_k-G^{\mathrm{den}}_{k,\sigma}|
    \leq e_k(\sigma),
    \qquad
    e_k(\sigma):=L_{\delta_k}
       \big(\alpha_k\sigma+\beta_k\sigma^{-\gamma_k}\big).
\label{eq:dependent_uniform_approximation}
\end{equation}
Applying \eqref{eq:dependent_id_concentration} and \eqref{eq:dependent_ood_detection} at the resulting shifted thresholds gives
\begin{align}
    \mathbb P_{\pinpk}(G^{\mathrm{den}}_{k,\sigma}\geq\tau)
    &\leq\frac{\Gamma^{\mathrm{ID}}_k}
                  {n_k^2(\tau-e_k(\sigma))^2},
    &&\tau>e_k(\sigma),
\label{eq:dependent_denoising_id_bound}\\
    \mathbb P_{R_k}(G^{\mathrm{den}}_{k,\sigma}<\tau)
    &\leq\frac{\Gamma^{\mathrm{OOD}}_k}
                  {n_k^2(\Delta_k-\tau-e_k(\sigma))^2},
    &&0<\tau<\Delta_k-e_k(\sigma).
\label{eq:dependent_denoising_ood_bound}
\end{align}

Under Assumption~\ref{assumption:conditional_independence}, the effective sample size equals the number of pixels in the superpixel. Without this assumption, the signed statistic remains centered under the ID model, but positive within-superpixel covariance reduces the effective sample size and weakens the concentration and detection bounds. The denoising approximation retains the same form, with the effect of dependence captured by the variance terms.

\section{Experimental Details}
\label{sec:experiment_details}

\subsection{Dataset summary}
Table~\ref{tab:dataset_summary} summarizes the train/test composition,
masking, and image resolution for each dataset. \mvtec{} and \chaos{} are
evaluated at $256\times256$ following standard practice; \ffhq{} is
$256\times256$; \xray{} is $600\times800$ in its native form, and we
report per-pixel mean metrics so resolution does not bias cross-dataset
comparison.

\begin{table}[h]
\centering
\caption{Dataset composition. Training images contain only inlier
(anomaly-free) examples. Test images contain local anomalies and are paired
with pixel-level ground-truth masks.}
\label{tab:dataset_summary}
\begin{tabular}{lcccc}
\toprule
Dataset & \# train (inlier) & \# test (ID + OOD) & Eval mask & Resolution \\
\midrule
\chaos{}           & 3{,}070 &   100 + 100  & body silhouette & $256{\times}256$ \\
\ffhq{}            & 5{,}000 &    50 + 50  & --- & $256{\times}256$ \\
\xray{} (scissors) & 5{,}000 &    72 + 72  & --- & $600{\times}800$ \\
\mvtec{} (15 categories) & 3{,}629 & 1{,}258 & --- & $256{\times}256$ \\
\bottomrule
\end{tabular}
\end{table}

\subsection{Diffusion models and reconstruction}
For \mvtec{}, we use DDAD's checkpoints, which are $\epsilon$-prediction UNet diffusion models, trained separately for each category. For all other benchmarks, we use the guided-diffusion UNet of Dhariwal and Nichol~\citep{guideddiffusion} trained from scratch on each inlier set, and produce reconstructions via DPS~\citep{dps}.

\subsection{\name{} hyperparameters}
For sample reconstruction using DPS \citep{dps}, we utilize the identity forward model and zero-mean Gaussian noise with variance $\sigma_{\mathrm{noise}}$.
DPS also has a scale parameter that controls the posterior guidance strength.

Once $M$ reconstructions are computed and the corresponding features are acquired, we rescale RGB values from $[0,255]$ and
autoencoder outputs from $[0,1]$ to a common $[0,255]$ range.
Then, we partition the reconstructions and feature maps into superpixels containing an average of $S_{\mathrm{avg}}$ pixels each, using marker-controlled watershed segmentation \citep{watershed}.
For each superpixel, we quantize the rescaled values into $B_{\mathrm{RGB}}$ and $B_\mathrm{feat}$ bins per dimension, for RGB values and autoencoder outputs respectively.
While it is ideal to construct $C+C'_f$-dimensional histograms, the number of histogram bins grows exponentially with the dimension.
Therefore, we construct two empirical histograms per superpixel, one over RGB values and one over feature values.
We smooth each histogram with a
Gaussian kernel of $\sigma_{\mathrm{hist}}$ to mitigate empty bins under limited
samples, and use the histograms to compute the RGB and feature typicality scores for each superpixel.
The two scores are summed with a weight parameter $\lambda$ multiplied to the feature typicality to balance the trade-off between the two scores.
Finally, we smooth the anomaly map that contains the same value over all pixels in each superpixel to acquire the pixel-level anomaly map.
All parameters are selected by performing a grid search over a small subset containing 20 test images, and the configuration with the highest AUROC is used to evaluate over the full test set.
The specific parameters used for each benchmark can be found in \Cref{tab:dataset_hyperparameters}.

\begin{table}[h]
  \caption{
  Dataset-specific hyperparameters used for \name{}.
  }
  \label{tab:dataset_hyperparameters}
  \vspace{1mm}
  \centering
  \small
  \setlength{\tabcolsep}{6pt}
  \begin{tabular}{lccc}
    \toprule
    Parameter
    & \chaos{}
    & \ffhq{}
    & \xray{} \\
    \midrule

    DPS $\sigma_{\mathrm{noise}}$
    & 0.1
    & 0.01
    & 2.0 \\

    DPS scale
    & 0.4
    & 0.1
    & 1.0 \\

    $S_{\mathrm{avg}}$
    & 100
    & 20
    & 10 \\

    $M$
    & 5
    & 20
    & 20 \\

    $\sigma_{\mathrm{hist}}$
    & 0.5
    & 1.0
    & 0 \\

    $(B_{\mathrm{RGB}}, B_{\mathrm{feat}})$
    & $(16,64)$
    & $(16,32)$
    & $(16,16)$ \\

    $\lambda$
    & 1.0
    & 0.25
    & 0 \\

    \bottomrule
  \end{tabular}
\end{table}

\subsection{Baselines}
For baselines, we use the official implementations of PatchCore, SimpleNet, SuperSimpleNet, EfficientAD, GLASS, and DDAD, with each method's default backbone:
WideResNet-50 for PatchCore; ImageNet-pretrained ResNet/WideResNet for the SimpleNet variants.

For fair comparison, we perform parameter tuning for each baseline method using the same set of 20 test images.
Note that we do not tune the parameters of SimpleNet, SuperSimpleNet, and GLASS as this would require retraining; we use the default parameters in the respective manuscripts or github repositories.
For PatchCore, $k$, the number of nearest neighbors in the coreset to compare each test patch, is swept across $k\in\{1,2,3,5,7,9,15\}$.
For EfficientAD, $\alpha$, which balances the strengths of the student-teacher detector and the autoencoder detector, is swept across $\alpha\in\{0,0.1,0.2,\cdots,1.0\}$.
For DDAD, a 2D grid of $(w,v)\in\{1,2,4,6,8\}\times\{0.5, 1, 2,4\}$ is searched, where $w$ controls the guidance strength during denoising and $v$ balances the strengths of pixels and extracted features.
The selected parameter values are described in \Cref{tab:dataset_hyperparameters_baselines}.

\begin{table}[h]
  \caption{
  Dataset-specific hyperparameters used for baseline methods.
  }
  \label{tab:dataset_hyperparameters_baselines}
  \vspace{1mm}
  \centering
  \small
  \setlength{\tabcolsep}{6pt}
  \begin{tabular}{lccc}
    \toprule
    Parameter
    & \chaos{}
    & \ffhq{}
    & \xray{} \\
    \midrule

    PatchCore
    & $k=1$
    & $k=1$
    & $k=1$ \\

    EfficientAD
    & $\alpha=0$
    & $\alpha=0.7$
    & $\alpha=0.2$ \\

    DDAD
    & $(w,v)=(2,1)$
    & $(w,v)=(2,4)$
    & $(w,v)=(4,4)$ \\

    \bottomrule
  \end{tabular}
\end{table}

\subsection{Feature extractor and autoencoder}
The feature extractor $\phi$ is an ImageNet-pretrained ResNet-101 \cite{resnet}, frozen at all stages. We extract activations from the last three residual stages, upsample them to the input resolution, and concatenate them. To reduce histogram error under limited samples, these features are compressed to $C'_f = 3$ dimensions by a per-pixel autoencoder trained on each inlier dataset.

\clearpage
\section{Additional Results}

We provide additional visualizations of each benchmark in \Cref{fig:chaos_more}, \Cref{fig:ffhq_more}, and \Cref{fig:dvxray_more}.

\begin{figure}[h]
  \centering
  \includegraphics[width=\linewidth]{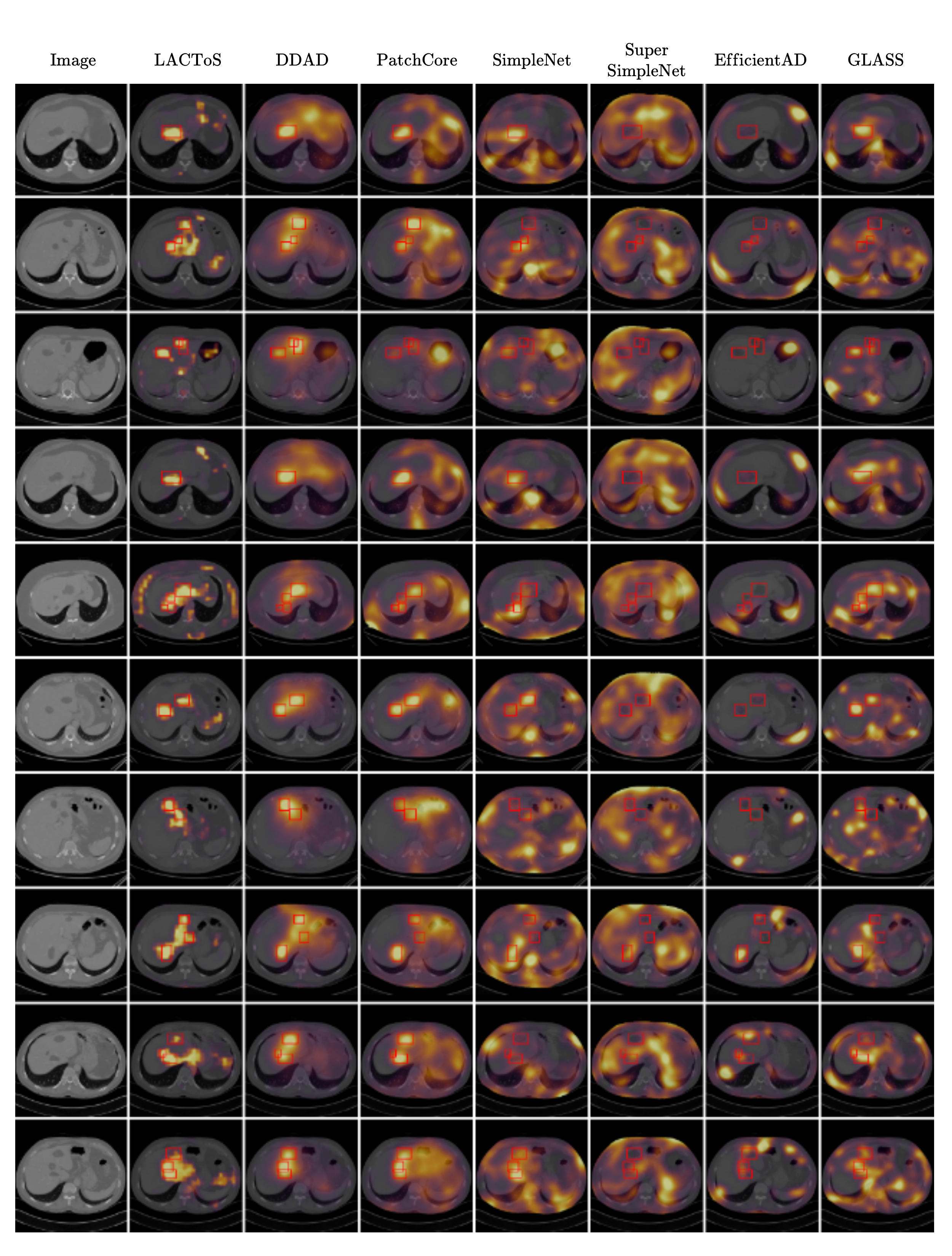}
  \vspace{-8mm}
  \caption{\textbf{\chaos{} Additional Visualizations.} 10 randomly selected test images and the heatmaps for each method.}
  \label{fig:chaos_more}
\end{figure}

\begin{figure}[h]
  \centering
  \includegraphics[width=\linewidth]{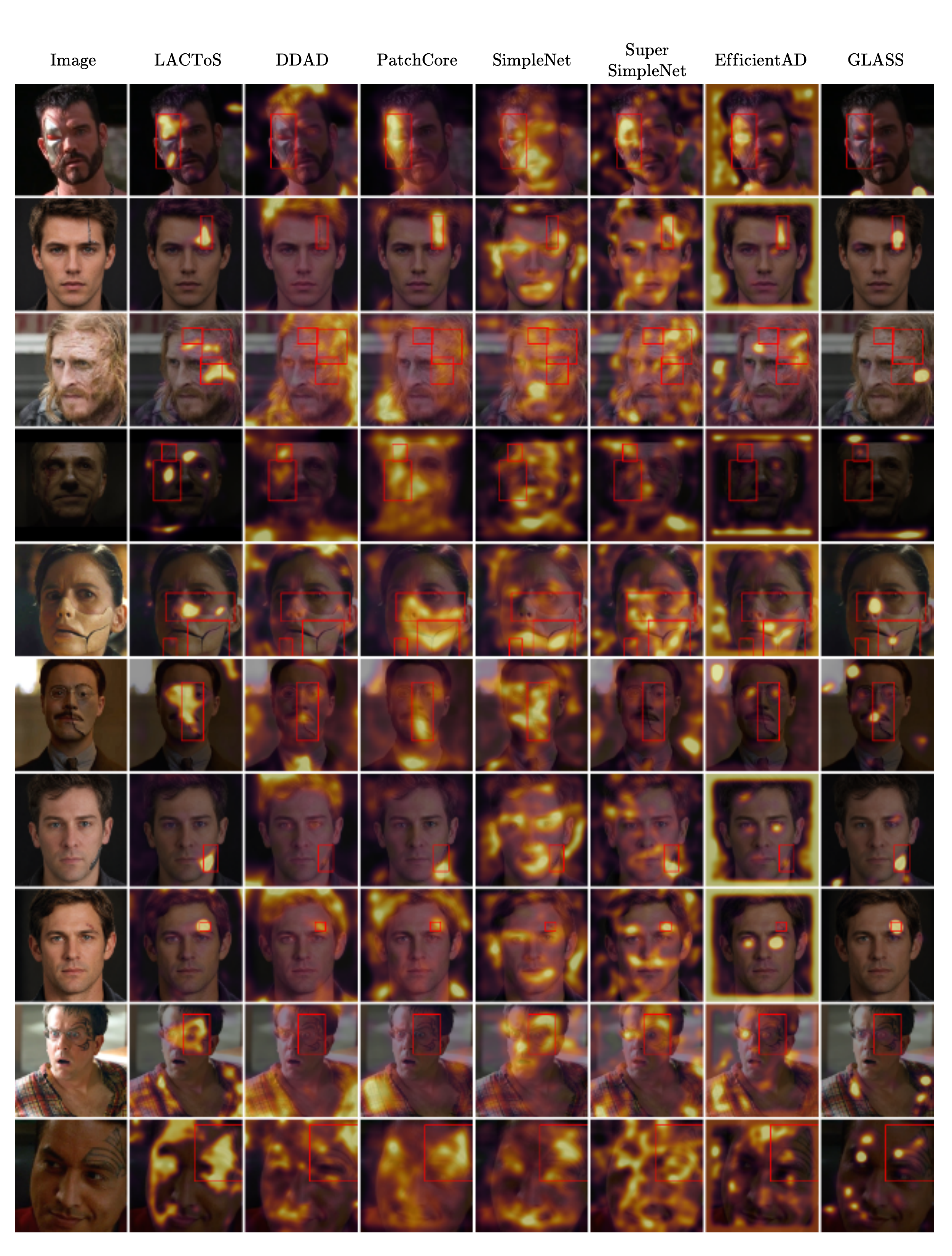}
  \vspace{-8mm}
  \caption{\textbf{\ffhq{} Additional Visualizations.} 10 randomly selected test images and the heatmaps for each method.}
  \label{fig:ffhq_more}
\end{figure}

\begin{figure}[h]
  \centering
  \includegraphics[width=\linewidth]{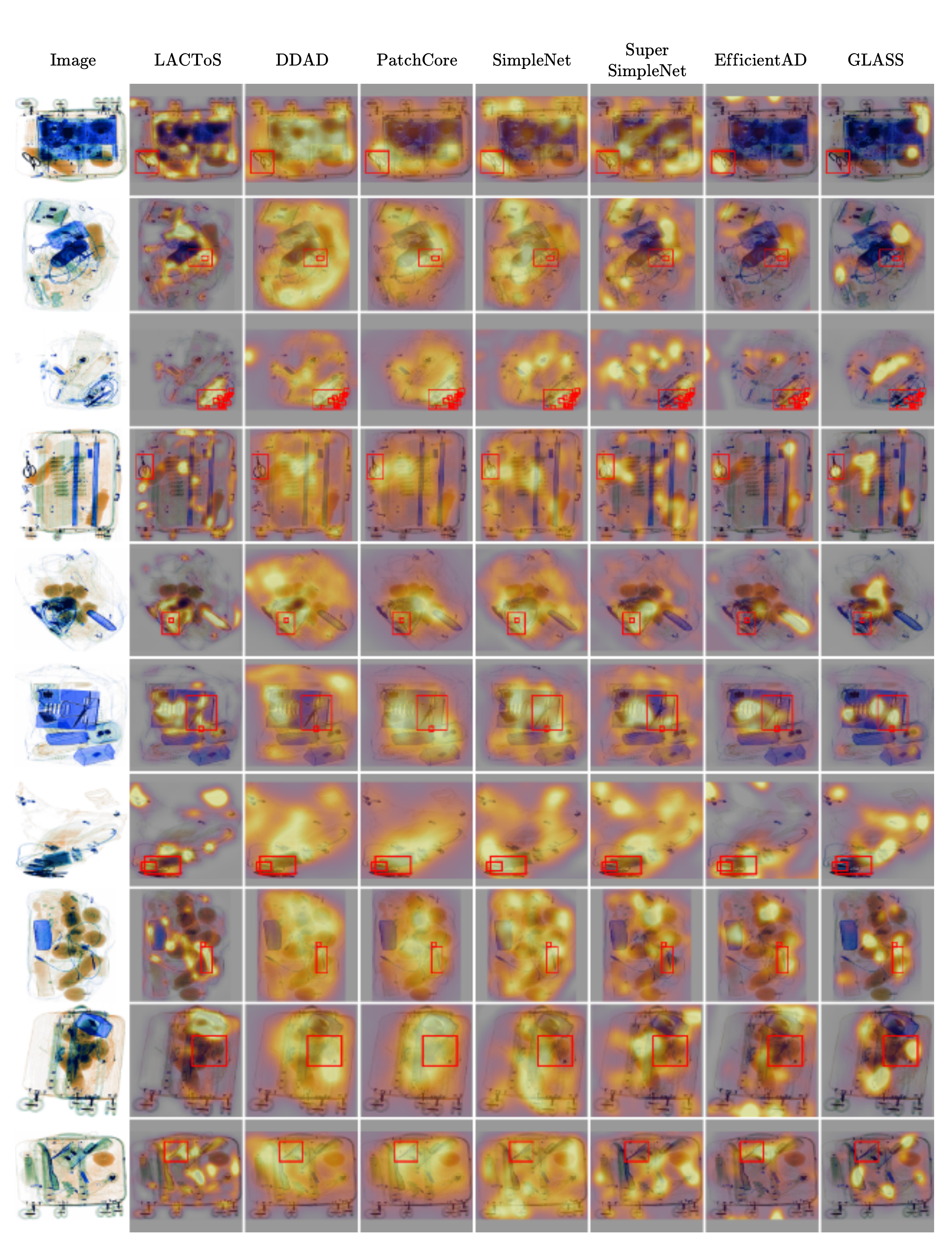}
  \vspace{-8mm}
  \caption{\textbf{\xray{} Additional Visualizations.} 10 randomly selected test images and the heatmaps for each method.}
  \label{fig:dvxray_more}
\end{figure}

\clearpage

In \Cref{tab:app_mvtec}, we report the AUROC, AP, and AUPRO of all methods for each of the object category in the MVTec dataset.

\begin{table*}[h]
  \centering
  \caption{
  \textbf{Results on MVTec AD.}
  We report pixel-level AUROC, AP, and AUPRO for each object or texture category.
  AUROC and AP pool every pixel of all test images of the category (anomalous and anomaly-free);
  AUPRO is the region-based per-region overlap integrated up to an FPR of 0.3.
  Each method occupies three rows, one per metric.
  Higher is better for all metrics.
  }
  \label{tab:app_mvtec}
  \vspace{1mm}
  \small
  \setlength{\tabcolsep}{4.5pt}
  \renewcommand{\arraystretch}{1.0}

  \resizebox{\linewidth}{!}{%
    \begin{tabular}{llcccccccc}
      \specialrule{1.5pt}{0.0ex}{0.75ex}
      Method & Metric
        & Carpet & Grid & Leather & Tile
        & Wood & Bottle & Cable & Capsule \\
      \midrule
      \multirow{3}{*}{SimpleNet}
        & AUROC & 0.9765 & 0.9656 & 0.9898 & 0.9326 & 0.9177 & 0.9775 & 0.9688 & 0.9862 \\
        & AP    & 0.3650 & 0.2440 & 0.3257 & 0.5470 & 0.3673 & 0.6315 & 0.6151 & 0.3729 \\
        & AUPRO & 0.9047 & 0.8757 & 0.9630 & 0.7781 & 0.7866 & 0.8945 & 0.8741 & 0.9093 \\
      \midrule
      \multirow{3}{*}{SuperSimpleNet}
        & AUROC & 0.9832 & 0.9771 & 0.9878 & 0.9383 & 0.9208 & 0.9779 & 0.9728 & 0.9805 \\
        & AP    & 0.4548 & 0.2383 & 0.3492 & 0.5109 & 0.3867 & 0.6202 & 0.6151 & 0.3973 \\
        & AUPRO & 0.9222 & 0.9236 & 0.9686 & 0.8107 & 0.8551 & 0.8982 & 0.8746 & 0.8753 \\
      \midrule
      \multirow{3}{*}{PatchCore}
        & AUROC & 0.9877 & 0.9729 & 0.9903 & 0.9496 & 0.9301 & 0.9813 & 0.9831 & 0.9873 \\
        & AP    & 0.5770 & 0.2326 & 0.3533 & 0.4947 & 0.4016 & 0.6842 & 0.6258 & 0.3935 \\
        & AUPRO & 0.9295 & 0.8932 & 0.9574 & 0.8212 & 0.8312 & 0.9253 & 0.9153 & 0.9227 \\
      \midrule
      \multirow{3}{*}{DDAD}
        & AUROC & 0.9822 & 0.9795 & 0.9878 & 0.9790 & 0.9374 & 0.9860 & 0.9784 & 0.8693 \\
        & AP    & 0.6422 & 0.2710 & 0.4124 & 0.7853 & 0.5285 & 0.8047 & 0.6641 & 0.2313 \\
        & AUPRO & 0.8994 & 0.9291 & 0.9811 & 0.9117 & 0.8604 & 0.9003 & 0.8835 & 0.6845 \\
      \midrule
      \multirow{3}{*}{EfficientAD}
        & AUROC & 0.9831 & 0.9833 & 0.9916 & 0.9816 & 0.9680 & 0.9884 & 0.9863 & 0.9915 \\
        & AP    & 0.7511 & 0.4328 & 0.5421 & 0.7505 & 0.6273 & 0.7852 & 0.7593 & 0.4603 \\
        & AUPRO & 0.9489 & 0.9144 & 0.9840 & 0.9249 & 0.9207 & 0.9550 & 0.9009 & 0.9445 \\
      \midrule
      \multirow{3}{*}{GLASS}
        & AUROC & 0.9952 & 0.9908 & 0.9952 & 0.9963 & 0.9761 & 0.9922 & 0.9569 & 0.9898 \\
        & AP    & 0.8108 & 0.4528 & 0.5518 & 0.9640 & 0.7196 & 0.8966 & 0.4997 & 0.4798 \\
        & AUPRO & 0.9832 & 0.9606 & 0.9815 & 0.9770 & 0.9467 & 0.9652 & 0.8470 & 0.9460 \\
      \midrule
      \multirow{3}{*}{\name{} (Ours)}
        & AUROC & 0.9822 & 0.9712 & 0.9935 & 0.9307 & 0.9099 & 0.9839 & 0.9739 & 0.9703 \\
        & AP    & 0.4489 & 0.3813 & 0.6339 & 0.7085 & 0.4139 & 0.8151 & 0.5891 & 0.4118 \\
        & AUPRO & 0.9209 & 0.8934 & 0.9540 & 0.8165 & 0.6801 & 0.8877 & 0.8712 & 0.8757 \\

      \specialrule{1.5pt}{0.75ex}{0.75ex}

      Method & Metric
        & Hazelnut & Metal Nut & Pill & Screw
        & Toothbrush & Transistor & Zipper & Average \\
      \midrule
      \multirow{3}{*}{SimpleNet}
        & AUROC & 0.9741 & 0.9747 & 0.9839 & 0.9887 & 0.9823 & 0.9746 & 0.9790 & 0.9715 \\
        & AP    & 0.3978 & 0.8049 & 0.7402 & 0.2325 & 0.3377 & 0.6181 & 0.4712 & 0.4714 \\
        & AUPRO & 0.9297 & 0.8082 & 0.9009 & 0.9464 & 0.8404 & 0.9354 & 0.9091 & 0.8837 \\
      \midrule
      \multirow{3}{*}{SuperSimpleNet}
        & AUROC & 0.9792 & 0.9855 & 0.9841 & 0.9873 & 0.9851 & 0.9618 & 0.9813 & 0.9735 \\
        & AP    & 0.5013 & 0.8630 & 0.7763 & 0.2242 & 0.4923 & 0.6081 & 0.4395 & 0.4985 \\
        & AUPRO & 0.9447 & 0.9004 & 0.9326 & 0.9434 & 0.8473 & 0.9130 & 0.9304 & 0.9027 \\
      \midrule
      \multirow{3}{*}{PatchCore}
        & AUROC & 0.9851 & 0.9659 & 0.9895 & 0.9908 & 0.9827 & 0.9742 & 0.9813 & 0.9768 \\
        & AP    & 0.5019 & 0.7060 & 0.7824 & 0.2585 & 0.3272 & 0.6312 & 0.4444 & 0.4943 \\
        & AUPRO & 0.9559 & 0.8557 & 0.9517 & 0.9572 & 0.8613 & 0.9431 & 0.9244 & 0.9097 \\
      \midrule
      \multirow{3}{*}{DDAD}
        & AUROC & 0.9832 & 0.9840 & 0.9461 & 0.8490 & 0.9729 & 0.9436 & 0.9751 & 0.9569 \\
        & AP    & 0.5405 & 0.8580 & 0.6220 & 0.0202 & 0.5126 & 0.6341 & 0.5388 & 0.5377 \\
        & AUPRO & 0.9279 & 0.9072 & 0.5272 & 0.5060 & 0.6514 & 0.9233 & 0.9056 & 0.8266 \\
      \midrule
      \multirow{3}{*}{EfficientAD}
        & AUROC & 0.9856 & 0.9847 & 0.9885 & 0.9945 & 0.9882 & 0.9775 & 0.9892 & 0.9855 \\
        & AP    & 0.5492 & 0.8220 & 0.7474 & 0.4125 & 0.5202 & 0.8170 & 0.5594 & 0.6358 \\
        & AUPRO & 0.9457 & 0.9337 & 0.9606 & 0.9692 & 0.9315 & 0.9455 & 0.9615 & 0.9427 \\
      \midrule
      \multirow{3}{*}{GLASS}
        & AUROC & 0.9863 & 0.8987 & 0.9727 & 0.9315 & 0.9629 & 0.5712 & 0.9948 & 0.9474 \\
        & AP    & 0.6267 & 0.5778 & 0.8330 & 0.2391 & 0.2321 & 0.1719 & 0.7725 & 0.5886 \\
        & AUPRO & 0.9733 & 0.9004 & 0.8555 & 0.7053 & 0.7514 & 0.3643 & 0.9822 & 0.8760 \\
      \midrule
      \multirow{3}{*}{\name{} (Ours)}
        & AUROC & 0.9892 & 0.9871 & 0.9777 & 0.9916 & 0.9871 & 0.9851 & 0.9535 & 0.9725 \\
        & AP    & 0.8157 & 0.9177 & 0.7423 & 0.4611 & 0.6775 & 0.8292 & 0.2678 & 0.6076 \\
        & AUPRO & 0.9190 & 0.8397 & 0.9110 & 0.9663 & 0.8534 & 0.8612 & 0.8247 & 0.8717 \\

      \bottomrule
    \end{tabular}%
  }
\end{table*}

\clearpage
\section{Ablation Studies}
\label{sec:ablations}

In this section, we ablate our method on the different choices of hyper-parameters. 

\paragraph{Sample efficiency.}
\Cref{tab:samplesize} reports the AUROC, AP, and AUPRO of diffusion-based methods DDAD \citep{ddad} and \name{} as the number of denoising samples $\M$ varies from 1 to 20.
\name{} exhibits improved performance with increasing $\M$ on \chaos{} and \xray{} compared to DDAD, which relies on point estimates of reconstruction errors.
This is consistent with the statistical properties of typicality: more samples yield better estimations of $\hat{q}_k^{\mathrm{inp}}$, which improves the reliability of the entropy and log likelihood estimates.
At $\M=1$, \name{} degenerates toward a reconstruction-error-like score.

\paragraph{Superpixel size.}
\Cref{tab:ablation_num_superpixels} shows the effect of the number of superpixels' average size in the performance of \name{}, which is mostly stable but with different trends across benchmarks.

\paragraph{AutoEncoder features.}
The sensitivity of the metrics under varying $\lambda$ is illustrated in \Cref{tab:ablation_pca_features}. $\lambda=0$ corresponds to using RGB typicality only. In most datasets, including feature typicality significantly improves performance.

\begin{table}[ht!]
  \caption{Ablation on sample size $M$ in diffusion-based methods. AUROC, AP, and AUPRO are reported for each dataset.}
  \label{tab:samplesize}
  \vspace{1mm}
  \centering
  \small
  \setlength{\tabcolsep}{4pt}
  \begin{tabular}{llcccccccc}
    \toprule
    \multirow{2}{*}{Dataset} & \multirow{2}{*}{Metric}
      & \multicolumn{2}{c}{$\M=1$} & \multicolumn{2}{c}{$\M=5$}
      & \multicolumn{2}{c}{$\M=10$} & \multicolumn{2}{c}{$\M=20$} \\
    \cmidrule(lr){3-4} \cmidrule(lr){5-6} \cmidrule(lr){7-8} \cmidrule(lr){9-10}
      & & DDAD & \name{} & DDAD & \name{} & DDAD & \name{} & DDAD & \name{} \\
    \midrule
    \multirow{3}{*}{\chaos{}}
      & AUROC & 0.9707 & 0.9552 & 0.9748 & 0.9620 & 0.9749 & 0.9615 & 0.9745 & 0.9620 \\
      & AP    & 0.3159 & 0.2536 & 0.3496 & 0.4318 & 0.3468 & 0.4236 & 0.3489 & 0.4338 \\
      & AUPRO & 0.8676 & 0.8524 & 0.8827 & 0.8764 & 0.8825 & 0.8713 & 0.8814 & 0.8729 \\
    \midrule
    \multirow{3}{*}{\ffhq{}}
      & AUROC & 0.7469 & 0.8446 & 0.7512 & 0.8521 & 0.7559 & 0.8503 & 0.7583 & 0.8515 \\
      & AP    & 0.0563 & 0.0937 & 0.0543 & 0.1197 & 0.0552 & 0.1119 & 0.0564 & 0.1177 \\
      & AUPRO & 0.3083 & 0.4638 & 0.3065 & 0.5157 & 0.3202 & 0.5239 & 0.3151 & 0.5320 \\
    \midrule
    \multirow{3}{*}{\xray{}}
      & AUROC & 0.8567 & 0.9347 & 0.8518 & 0.9541 & 0.8523 & 0.9585 & 0.8495 & 0.9607 \\
      & AP    & 0.0118 & 0.0248 & 0.0119 & 0.0465 & 0.0119 & 0.0530 & 0.0118 & 0.0568 \\
      & AUPRO & 0.5183 & 0.7385 & 0.4896 & 0.7964 & 0.4880 & 0.8055 & 0.4890 & 0.8179 \\
    \bottomrule
  \end{tabular}
\end{table}

\begin{table}[ht!]
  \caption{Ablation on the average size of superpixels. The larger the average size, the smaller the total number of superpixels. AUROC, AP, and AUPRO on \chaos{}, \ffhq{}, and \xray{}.}
  \label{tab:ablation_num_superpixels}
  \vspace{1mm}
  \centering
  \begin{tabular}{llccccccc}
    \toprule
    \multirow{2}{*}{Dataset} & \multirow{2}{*}{Metric}
      & \multicolumn{7}{c}{Average size of superpixels} \\
    \cmidrule(lr){3-8}
      & & $10$ & $20$ & $30$ & $50$ & $70$ & $100$ & $200$ \\
    \midrule
        \multirow{3}{*}{\chaos{}}
      & AUROC & $0.9418$ & $0.9504$ & $0.9535$ & $0.9578$ & $0.9595$ & $0.9620$ & $0.9593$ \\
      & AP    & $0.1271$ & $0.1599$ & $0.1857$ & $0.2434$ & $0.2643$ & $0.4318$ & $0.3677$ \\
      & AUPRO & $0.7992$ & $0.8274$ & $0.8385$ & $0.8499$ & $0.8571$ & $0.8764$ & $0.8600$ \\
    \midrule
    \multirow{3}{*}{\ffhq{}}
      & AUROC & $0.7930$ & $0.8515$ & $0.8502$ & $0.8480$ & $0.8462$ & $0.8438$ & $0.8383$ \\
      & AP    & $0.0774$ & $0.1177$ & $0.1145$ & $0.1107$ & $0.1095$ & $0.1065$ & $0.0998$ \\
      & AUPRO & $0.3890$ & $0.5320$ & $0.5263$ & $0.5180$ & $0.5101$ & $0.5068$ & $0.4863$ \\
    \midrule
    \multirow{3}{*}{\xray{}}
      & AUROC & $0.9607$ & $0.9611$ & $0.9593$ & $0.9501$ & $0.9468$ & $0.9463$ & $0.9330$ \\
      & AP    & $0.0568$ & $0.0570$ & $0.0537$ & $0.0495$ & $0.0458$ & $0.0418$ & $0.0342$ \\
      & AUPRO & $0.8179$ & $0.8122$ & $0.8125$ & $0.8040$ & $0.7872$ & $0.7817$ & $0.7466$ \\

    \bottomrule
  \end{tabular}
\end{table}

\begin{table}[t!]
  \caption{
  Ablation on the feature-weighting parameter $\lambda$.
  $\lambda=0$ corresponds to using RGB features only.
  AUROC, AP, and AUPRO are reported on \chaos{}, \ffhq{}, and \xray{}.
  }
  \label{tab:ablation_pca_features}
  \vspace{1mm}
  \centering
  \small
  \setlength{\tabcolsep}{4pt}
  \renewcommand{\arraystretch}{1.1}
  \resizebox{\linewidth}{!}{%
  \begin{tabular}{llccccccccc}
    \toprule
    Dataset & Metric
    & $\lambda=0$
    & $\lambda=0.25$
    & $\lambda=0.5$
    & $\lambda=0.75$
    & $\lambda=1.0$
    & $\lambda=1.5$
    & $\lambda=2$
    & $\lambda=4$
    & Feature only \\
    \midrule

    \multirow{3}{*}{\chaos{}}
      & AUROC & 0.8418 & 0.9638 & 0.9632 & 0.9625 & 0.9620 & 0.9613 & 0.9609 & 0.9601 & 0.9592 \\
      & AP    & 0.0729 & 0.3893 & 0.4318 & 0.4356 & 0.4318 & 0.4192 & 0.4056 & 0.3645 & 0.2733 \\
      & AUPRO & 0.5689 & 0.8764 & 0.8778 & 0.8772 & 0.8764 & 0.8755 & 0.8748 & 0.8735 & 0.8717 \\
    \midrule
    \multirow{3}{*}{\ffhq{}}
      & AUROC & 0.7605 & 0.8515 & 0.8488 & 0.8459 & 0.8437 & 0.8406 & 0.8386 & 0.8347 & 0.8293 \\
      & AP    & 0.0677 & 0.1177 & 0.1115 & 0.1075 & 0.1041 & 0.0997 & 0.0975 & 0.0944 & 0.0916 \\
      & AUPRO & 0.3688 & 0.5320 & 0.5353 & 0.5345 & 0.5331 & 0.5305 & 0.5287 & 0.5252 & 0.5211 \\
    \midrule
    \multirow{3}{*}{\xray{}}
      & AUROC & 0.9607 & 0.9589 & 0.9522 & 0.9454 & 0.9397 & 0.9305 & 0.9235 & 0.9057 & 0.8088 \\
      & AP    & 0.0568 & 0.0535 & 0.0425 & 0.0336 & 0.0285 & 0.0232 & 0.0205 & 0.0165 & 0.0115 \\
      & AUPRO & 0.8179 & 0.7980 & 0.7686 & 0.7432 & 0.7223 & 0.6905 & 0.6675 & 0.6122 & 0.4791 \\

    \bottomrule
  \end{tabular}%
  }
\end{table}

\end{document}